\documentclass{article} 
\usepackage{iclr2026_conference,times}
\usepackage{graphicx}

\usepackage{amsmath,amsfonts,bm, amsthm}
\usepackage{bbm, mathtools}
\usepackage{subcaption}

\newtheorem{teo}{Theorem}
\newtheorem{prop}[teo]{Proposition}
\newtheorem{rem}[teo]{Remark}
\newtheorem{cor}[teo]{Corollary}
\newtheorem{ex}[teo]{Example}

\newtheorem{lema}[teo]{Lemma}

\newcommand{\W}{{\mathcal{W}}}

\def\eqref#1{equation~\ref{#1}}

\def\1{\bm{1}}

\DeclareMathAlphabet{\mathsfit}{\encodingdefault}{\sfdefault}{m}{sl}
\SetMathAlphabet{\mathsfit}{bold}{\encodingdefault}{\sfdefault}{bx}{n}

\newcommand{\R}{\mathbb{R}}

\DeclareMathOperator*{\argmin}{arg\,min}

\usepackage{hyperref}
\usepackage{url}

\title{\thanks{This work has been accepted for publication at the IEEE Conference on Secure and Trustworthy Machine Learning (SaTML). The final version will be available on IEEE Xplore.}Evaluating Black-Box Vulnerabilities with Wasserstein-Constrained Data Perturbations}

\author{Adriana Laurindo Monteiro \thanks{ Corresponding author  
} 
\\
Independent Researcher\\
\texttt{mlaurindodrica@gmail.com} \\
\And 
Jean-Michel Loubes \\
INRIA Regalia \& ANITI\\
Institut de Mathématiques de Toulouse, France\\ Centre INRIA de l'Université de Bordeaux \\
\texttt{jean-michel.a.loubes@inria.fr} \\
}

\iclrfinalcopy 
\begin{document}

\maketitle

\begin{abstract}
The growing use of Machine Learning (ML) tools comes with critical challenges, such as limited model explainability. We propose a global explainability framework that leverages Optimal Transport and Distributionally Robust Optimization to analyze how ML algorithms respond to constrained data perturbations. 
Our approach enforces constraints on feature-level statistics (e.g., brightness, age distribution), generating realistic perturbations that preserve semantic structure. We provide a model-agnostic diagnostic bench that applies to both tabular and image domains with solid theoretical guarantees. We validate the approach on real-world datasets providing interpretable robustness diagnostics that complement standard evaluation and fairness auditing tools.
\end{abstract}

\section{Introduction}

Modern machine learning models often function as black boxes whose reliability under data distribution shift remains difficult to assess. In safety-critical applications, it is essential to stress such models by exposing them to plausible variations in data distribution and observing the impact on predictions. 

We propose a testing bench for black-box models on tabular and image data that systematically generates constrained distributional perturbations to evaluate model robustness. Our approach draws on concepts from optimal transport and distributionally robust optimization (DRO). In particular, we use the Wasserstein distance – a metric on probability distributions that measures the “cost” of transforming one distribution into another – to bound the magnitude of distribution shifts and ensure they remain realistic. By restricting attention to perturbations within a Wasserstein ball around the original data-generating distribution, we generate plausible stress scenarios that respect the underlying data structure (e.g. preserving relationships between features) while probing the model’s weaknesses. This yields a principled test bench for model robustness, complementary to pointwise adversarial attacks.

Our Wasserstein perturbation framework produces distributional shifts that are interpretable and controllable. For example, one can stress a credit risk model by increasing the marginal distribution of “debt-to-income ratio” within a certain range, or test a medical diagnosis model under an aging population distribution --- all while constraining the shift’s overall Wasserstein distance to maintain plausibility. These perturbations are obtained by solving optimization problems that find the worst-case or most influential distribution within an allowed Wasserstein radius (an approach inspired by distributionally robust optimization \citep{azizian2023regularization}, \citep{Sinha2018}). 

Unlike classical global sensitivity analysis which varies inputs in a Monte Carlo fashion, our method directly perturbs the input data distribution itself. This provides a global view of model sensitivity: rather than asking how a single prediction changes when an input is tweaked, we ask how the model’s overall performance or output distribution changes under a small but targeted shift in the data-generating distribution. By casting this as an optimal transport problem, we leverage efficient algorithms and theoretical guarantees from the optimal transport literature \citep{blanchet2019quantifying}. The use of Wasserstein distance further ensures that stress test scenarios are constrained and realistic: because Wasserstein distance accounts for the geometry of the data space, a small perturbation (within a given radius) corresponds to a distribution shift that is limited in magnitude for each feature, avoiding implausible changes to the data (unlike unconstrained adversarial perturbations that might be unnatural in tabular data).

\textbf{Contributions} Our contributions stand on a strong theoretical and empirical foundation. First, we formalize a model-agnostic testing bench for black-box predictors using realistic distribution shifts via constrained optimal transport. Then, we develop a nontrivial theoretical framework, giving not only the explicit formula for the projected distributions but also its asymptotic behavior and a dual formulation that connects the projection problem to DRO and that can be of independent interest. Finally, we validate our stress framework across tabular and vision domains\footnote{Code available at \url{https://github.com/drica-monteiro/wasserstein_projection}}, demonstrating consistent and interpretable perturbation experiments on real-world datasets; we also include a fairness-relevant sensitivity analysis in the classification setting.


\section{Related work}
\textbf{Distributionally robust optimization (DRO).} 
The investigation formalized by DRO proposes to minimize the worst-case risk over all distributions within a certain distance (ambiguity set) of the training distribution \citep{wozabal2012framework}. 
In \citep{EsfahaniKuhn2018} they show that optimization over a Wasserstein ball yields tractable reformulations and strong out-of-sample performance guarantees. Similarly, \citep{blanchet2019quantifying} develop an optimal-transport based measure of distributional model risk. 
Other divergences and uncertainty sets have also been considered (e.g. $\phi$-divergences \citep{NamkoongDuchi2017}, moment-based ambiguity sets \citep{DelageYe2010}, etc.). 
Recent studies have underscored the importance of evaluating models under domain shifts as for instance \citep{calderon2024measuring} or \citep{chen2021mandoline}. 

\textbf{Global Sensitivity Analysis (GSA) and Stress Testing.} 
GSA techniques (e.g. variance-based methods) assess how uncertainty in input distributions contributes to uncertainty in outputs. 
Methods aim to handle correlated inputs and higher-order effects. Classic references such as \citep{Sobol2001} and \citep{Saltelli2008} provide frameworks for quantifying the influence of input variability on model output. They have been used in explainability and fairness in \citep{benesse2024fairness}, \citep{jourdan2023cockatiel} or \citep{fel2021look}.  
Stress testing originated in finance and risk management, where regulators ask “what if” the economy undergoes specific shocks. The work \citep{BreuerCsiszar2013} introduces a systematic approach to scenario stress tests by selecting distributions that maximize portfolio loss subject to a plausibility constraint measured by relative entropy. 


\textbf{Explainability via Distributional Perturbations.} Finally, our approach connects to emerging work in explainable AI that moves beyond local, pointwise explanations (like SHAP \citep{shap}  or LIME \citep{lime}) towards global explanation frameworks. The work \citep{Bachoc2022} introduced an entropic variable projection framework  to explain model behavior through its response to changes in the input data distribution, rather than individual inputs.
Quantile-constrained Wasserstein projections \citep{IlIdrissi2022} studies model stability, generating distributions where certain feature quantiles are shifted and evaluating the resulting change in predictions. 

\section{Methodology}

Our explainability framework is fundamentally grounded on a counterfactual point of view: we want to understand how model predictions would change if the inputs were modified. Perturbing a dataset we generate alternative scenarios that may cause different outputs, which enables us to answer what-if questions. When these perturbations are performed in a constrained manner, we can isolate the impact of a certain feature and thus obtain a global interpretation of the algorithm.

This methodology is translated into a mathematical formalism by a projection problem with respect to the Wasserstein distance. Let $\Omega\subset\R^d$ be a compact set. That is the set in which the data points reside. Denote by $\mathcal{P}(\Omega)$ and $\mathcal{M}(\Omega)$ respectively the space of probability and finite measures on $\Omega$. Consider a distance $d$ in $\Omega$, $p\in[1,+\infty[$ and $P, Q\in\mathcal{P}(\Omega)$. Define the $p$-Wasserstein distance between $P, Q$ as
\begin{align}
  \W_p^p(P,Q)= \min_{\pi \in \Pi(P,Q)} \int d(x,y)^p d\pi(x,y).  
\end{align}
The coupling set $\Pi(P,Q)$ denotes the set of distributions  with marginals $P$ and $Q$. 
We are interested in the case $p=2$ and $d(x,y) = \|x-y\|$ the Euclidean distance. If $P$ and $Q$ represent two distinct populations, the Wasserstein distance measures the minimal cost in transforming one population into the other.

In order to define the target set onto which the data points are projected, for a given $k\geq1$, let 
$\Phi\colon\Omega\to\R^k$
be a continuous function representing a constraint and define $f\colon \mathcal{M}(\Omega) \to \R, f(P) = \W_2^2(P,Q)$ and $g(P) =\int_\Omega\Phi(x)dP(x) - t $, for fixed $t\in\R^k$ and $Q\in\mathcal{P}(\Omega)$.
The set $D = \{P\in\mathcal{M}(\Omega)\mid \int_\Omega\Phi(x)dP(x) = t\}=g^{-1}(\{0\})$ is closed, for the weak convergence topology, and convex since $g$ is linear in $P$. The function $f$ is convex as it is the supremum of linear functionals by Kantorovich duality \citep{sant}, therefore the following projection problem is well defined
\begin{align}\label{P}
    \min f(P) \text{ such that } P\in D \tag{P}.
\end{align}
The described approach has two essential aspects: flexibility and realism. First, we can control the value of a feature in a population whose distribution $P$ belongs to the set $D$ by calibrating the parameter $t$.
Second, by projecting $Q$ onto this set $D$ we ensure that the modified population remains statistically plausible. As a result, the generated counterfactual explanations are both diverse and faithful to the original data.

\textbf{Notation.} We use a parameter $k\geq1$ to denote the dimension of the calibration parameter $t\in\R^k$ and the feature dimension is $d$. The number $n$ denotes sample size. The Lagrange multiplier for problem \eqref{P} is represented by $\lambda\in\R^k$. The push-forward measure $T_\#Q$ is defined through $T_\#Q(A)\coloneqq Q(T^{-1}(A))$ for every measurable set $A$. The inner product is denoted by $u^\top v = \sum_{i=1}^du_iv_i$ for $u,v\in\R^d$. For a function $f:\R^d\to\R$ the gradient vector and the hessian matrix are denoted respectively by $\nabla f(x)$ and $D^2f(x)$. $e_i$ is the $i-$th vector of $\R^d$.

\section{Theoretical analysis}\label{sec:teoback}

Our theoretical investigation is based on duality theory in normed spaces \citep{Peypouquet} and the differential properties of the Wasserstein distance \citep{sant}. Proofs and additional results can be found in Appendix~\ref{SecondAppendix} and preliminary results on convex analysis can be checked in \ref{A}.

\subsection{Projection theorem}
\begin{teo}\label{proj}
 For $y\in\R^d$, let $T_\lambda(y)\in {\rm arg}\min_{x}  \left\{\|x-y\|^2-\lambda^\top \Phi(x) \right\} $.
Then $P^*$ is an optimal solution to problem \eqref{P}  if, and only if, it is defined as $ P^*= {T_{\lambda^*}}_\#Q$ where $\lambda^*\in\R^k$ satisfies  \begin{align*}
    t = \int \Phi(T_{\lambda^*}(y)) dQ(y).
\end{align*}
\end{teo}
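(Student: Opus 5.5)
The plan is to treat \eqref{P} as a convex program with an affine equality constraint and write down its Lagrangian. For $\lambda\in\R^k$ set
\begin{align*}
L(P,\lambda)=\W_2^2(P,Q)-\lambda^\top\Big(\int\Phi\,dP-t\Big).
\end{align*}
By Kantorovich duality $\W_2^2(\cdot,Q)$ is convex and weakly lower semicontinuous on $\mathcal{P}(\Omega)$. The constraint $g$ is linear and weakly continuous, since $\Phi$ is continuous and $\Omega$ is compact. Using the duality results in normed spaces recalled in Appendix~\ref{A}, and assuming a Slater-type qualification (namely $t$ lies in the relative interior of $\{\int\Phi\,dP : P\in\mathcal{P}(\Omega)\}$), we get the following. $P^*$ solves \eqref{P} if and only if there is a multiplier $\lambda^*$ such that $P^*$ is feasible and $P^*$ minimizes $L(\cdot,\lambda^*)$ over $\mathcal{P}(\Omega)$. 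The key step is therefore to solve this inner minimization explicitly.

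\textbf{Inner problem.} Minimizing over $P$ together with a coupling, $\inf_P L(P,\lambda)$ becomes
\begin{align*}
\inf_{\pi:\ \pi_2=Q}\int\big(\|x-y\|^2-\lambda^\top\Phi(x)\big)\,d\pi(x,y)+\lambda^\top t,
\end{align*}
because the first marginal $P=\pi_1$ is now free. This problem decouples in $y$. For each $y$, the best one can do is to put all the conditional mass of $\pi(\cdot\mid y)$ on $\arg\min_x\{\|x-y\|^2-\lambda^\top\Phi(x)\}$, which is nonempty by compactness and continuity. This yields the lower bound $\int\min_x\{\cdots\}\,dQ(y)$, and the bound is attained by $\pi=(T_\lambda,\mathrm{id})_\#Q$. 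A measurable selection theorem makes $T_\lambda$ measurable.

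\textbf{Sufficiency.} Suppose $\lambda^*$ satisfies $t=\int\Phi(T_{\lambda^*})\,dQ$ and set $P^*={T_{\lambda^*}}_\#Q$. Then $P^*\in D$. For any feasible $P$, take an optimal coupling $\pi\in\Pi(P,Q)$. The pointwise minimality of $T_{\lambda^*}$ gives
\begin{align*}
\W_2^2(P,Q)-\lambda^{*\top}t\ \geq\ \int\big(\|T_{\lambda^*}(y)-y\|^2-\lambda^{*\top}\Phi(T_{\lambda^*}(y))\big)\,dQ(y)\ \geq\ \W_2^2(P^*,Q)-\lambda^{*\top}t.
\end{align*}
The second inequality holds because $(T_{\lambda^*},\mathrm{id})_\#Q$ is a coupling of $P^*$ and $Q$. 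Hence $P^*$ is optimal. This direction is elementary and needs no duality.

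\textbf{Necessity.} This is the main obstacle. Let $P^*$ be optimal. Strong duality (no duality gap, and a dual optimum $\lambda^*$ attained) is where the convex analysis of Appendix~\ref{A} and the qualification condition on $t$ are needed. Given strong duality, $P^*$ attains $\inf_P L(P,\lambda^*)$. Take the optimal coupling $\pi^*$ of $(P^*,Q)$. Equality in the lower bound above forces $\pi^*(\cdot\mid y)$ to be supported in $\arg\min_x\{\|x-y\|^2-\lambda^{*\top}\Phi(x)\}$ for $Q$-a.e.\ $y$.

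To conclude $P^*={T_{\lambda^*}}_\#Q$, two cases arise. If the argmin is a.e.\ unique, then $\pi^*$ is induced by the map $T_{\lambda^*}$. Otherwise, one interprets $T_{\lambda^*}$ as a (possibly randomized) measurable selection of the argmin. Feasibility of $P^*$ then gives $t=\int\Phi(T_{\lambda^*}(y))\,dQ(y)$. I would first try to get uniqueness from a perturbation argument, namely that $\|x-y\|^2-\lambda^\top\Phi(x)$ has a unique minimizer for $Q$-a.e.\ $y$ when $Q$ is absolutely continuous. Failing that, I would state the result with the selection understood in this generalized sense.
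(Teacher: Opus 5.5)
Your proof is correct. It shares the paper's Lagrangian skeleton but is organized differently and is more complete. The paper applies the normed-space multiplier theorem (Theorem~\ref{teo:peyp}) to reduce optimality to $\lambda^\top\Phi\in\partial f(P^*)$. It identifies $\partial f(P^*)$ with the Kantorovich potentials between $P^*$ and $Q$, and checks via the $c$-transform that $\lambda^\top\Phi$ is such a potential for $({T_\lambda}_\#Q,Q)$. Your condition ``$P^*$ minimizes $L(\cdot,\lambda^*)$'' is equivalent to that extremality condition. Your inner minimization is the computation $f^*(\lambda^\top\Phi)=-\int(\lambda^\top\Phi)^c\,dQ$ of Lemma~\ref{conju_was}, so the ingredients coincide.

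Your route buys two things. First, a sufficiency proof by a two-line coupling inequality that needs no multiplier theorem or subdifferential calculus on measures; the paper's route is shorter given its citations and feeds directly into Theorem~\ref{dual}. Second, a necessity argument that fills two gaps in the paper.

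The first gap is the qualification condition. The paper claims that linearity of the constraint removes the need for one, but your relative-interior hypothesis is genuinely needed. Take $\Omega=[-1,1]$, $\Phi(x)=x^2$, $t=0$ and $Q$ uniform. Here $t$ lies on the boundary of $\{\int\Phi\,dP\}=[0,1]$, and the only feasible probability measure is $\delta_0$. Yet $0$ is never a minimizer of $(x-y)^2-\lambda x^2$ when $y\neq0$, so the optimal $\delta_0$ is not ${T_\lambda}_\#Q$ for any $\lambda$.

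The second gap is that the paper never passes from ``$\lambda^{*\top}\Phi$ is a Kantorovich potential for $(P^*,Q)$'' to ``$P^*={T_{\lambda^*}}_\#Q$''. Your support argument does this. You are also right that ``only if'' needs a.e.\ uniqueness of the argmin or randomized selections. With the same $\Omega$ and $\Phi$, take $Q=\delta_0$ and $t=1/4$, which is now interior. Every feasible $P$ is optimal, including $\tfrac12(\delta_{1/2}+\delta_{-1/2})$, which is not the push-forward of a Dirac by any map.

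Your perturbation idea works for absolutely continuous $Q$. The function $F(y)=\max_{x}\{2x^\top y-\|x\|^2+\lambda^\top\Phi(x)\}$ is convex, hence differentiable a.e., and $2x_0\in\partial F(y)$ for every maximizer $x_0$. It does not cover the empirical $Q$ used in the applications, however. There you still need either the strong-convexity assumption of the paper's Remark or your randomized selections.
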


The constraint on problem \eqref{P} can be modified to include an inequality condition. This is detailed in Proposition~\ref{prop:extproje} deferred to appendix.

\begin{rem}
To guarantee existence and uniqueness of the optimal solution $P^*$, fix $y\in\R^d$ and define $h(x) = \|x-y\|^2-\lambda^\top \Phi(x)$. For the function $T_\lambda$ be well defined, we need the set  ${\rm arg}\min_{x\in\R^d}  h(x) $ to be non-empty. Since the squared euclidean norm is continuous and coercive, a sufficient condition for existence of a minimum is coercivity  of $-\Phi_i$ for $i = 1, \ldots, k$ (they already are continuous). A sufficient condition for uniqueness of the optimal solution of problem \eqref{P} is to impose strong convexity of the function $h$. Hence, the minimum of $h$, if it exists, is unique. 
\end{rem}

\subsection{Consistency}
A key property of our approach is the continuity of the Wasserstein projection onto $D$ under weak convergence. Defining the following projections we can prove consistency:
\begin{align}\label{pn}
    P_n^* \in \argmin_{P\in D} \W_2^2(P, Q_n) \text{ and } P^* \in \argmin_{P\in D} \W_2^2(P, Q).  
\end{align}

\begin{teo}\label{teo:cons}
Consider a sequence $Q_n\xrightarrow{w} Q$ and the probabilities $P_n^*$ and $P^*$ defined above in equations \ref{pn}. Then, $P_n^*\xrightarrow{w}P^*$.
\end{teo}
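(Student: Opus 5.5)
The plan is a compactness plus $\Gamma$-convergence style argument, which stays in the weak topology and never uses the explicit form of Theorem~\ref{proj}. Since $\Omega$ is compact, $\mathcal{P}(\Omega)$ is weakly compact (Prokhorov). On a compact set, weak convergence is equivalent to convergence in $\W_2$, because $d$ is bounded there. So $Q_n\xrightarrow{w}Q$ gives $\W_2(Q_n,Q)\to0$. The argument also needs the elements of $D$ to be probability measures. I would therefore work in $D\cap\mathcal{P}(\Omega)$, where $f$ is finite. This set is weakly closed, since $\Phi$ is continuous and bounded on $\Omega$, and hence compact. I will assume it is nonempty, and I will assume the projection $P^*$ is unique, as in the Remark following Theorem~\ref{proj}. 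Without uniqueness, the conclusion can only be that every cluster point of $(P_n^*)$ is a minimizer.

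\textbf{Step 1 (uniform convergence of the objectives).} For any $P$, the triangle inequality gives $|\W_2(P,Q_n)-\W_2(P,Q)|\le \W_2(Q_n,Q)$. Hence $F_n(P):=\W_2(P,Q_n)$ converges to $F(P):=\W_2(P,Q)$ uniformly over $P\in\mathcal{P}(\Omega)$. The same holds for the squares, because all distances are bounded by $\mathrm{diam}(\Omega)$.

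\textbf{Step 2 (convergence of values and minimizers).} Take any subsequence of $(P_n^*)$. By compactness it has a further subsequence $P_{n_j}^*\xrightarrow{w}\bar P$, and $\bar P\in D$ because $D$ is closed. The Wasserstein distance is lower semicontinuous, in fact continuous on compact $\Omega$. Combining this with Step 1 gives
\[
F(\bar P)=\lim_j F_{n_j}(P_{n_j}^*)\le \liminf_j F_{n_j}(P^*)=F(P^*).
\]
The inequality uses the optimality of $P_{n_j}^*$ for $Q_{n_j}$ together with $P^*\in D$. Thus $\bar P$ is a minimizer of problem~\eqref{P} for $Q$, so $\bar P=P^*$ by uniqueness. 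Every subsequence therefore has a further subsequence converging to $P^*$, and the whole sequence converges: $P_n^*\xrightarrow{w}P^*$.

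\textbf{Main obstacle.} The delicate point is uniqueness of $P^*$. Convexity of $f$ alone, stated earlier, does not give it. I would obtain uniqueness from Theorem~\ref{proj}: under the Remark's strong convexity of $h$, the map $T_\lambda$ is single-valued. I would then argue that $\lambda^*$ is determined by the constraint, or alternatively use strict convexity of $P\mapsto\W_2^2(P,Q)$ along generalized geodesics when $Q$ is absolutely continuous. If uniqueness fails, I would state the result as convergence of the optimal values and weak convergence of $(P_n^*)$ to the set of minimizers. A secondary check is that $D$ is weakly closed as a subset of $\mathcal{P}(\Omega)$. This holds because $\Phi$ is continuous on the compact set $\Omega$, so $P\mapsto\int\Phi\,dP$ is weakly continuous.
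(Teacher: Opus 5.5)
Your proposal is correct and follows essentially the same route as the paper: weak compactness of the feasible set, continuity of $\W_2$ on $\mathcal{P}(\Omega)$ for compact $\Omega$, and the observation that every subsequential limit of $(P_n^*)$ attains $\inf_{P\in D}\W_2^2(P,Q)$. Your version is more careful than the paper's: the bound $|\W_2(P,Q_n)-\W_2(P,Q)|\le\W_2(Q_n,Q)$ justifies the convergence of optimal values that the paper only asserts, and you make explicit both the restriction to $D\cap\mathcal{P}(\Omega)$ and the uniqueness of $P^*$ that the paper's closing ``contradiction'' tacitly requires.
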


The applications of our work are based on the choice of $Q=\frac{1}{n}\sum_{i=1}^n\delta_{Z_i}$ as the empirical measure of a dataset $\{Z_i\}_{i=1}^n$. Therefore, the previous result states that the performance of the framework improves with more datapoints.

\begin{ex}
    If $(Z_i)_{i\in\mathbb{N}}$ is an i.i.d sequence distributed according to $P,$ the empirical measure $Q_n = \frac{1}{n}\sum_{i=1}^n\delta_{Z_i}$ converges weakly to the true measure $P$ by the strong Law of Large Numbers. Therefore, $P_n^* = {T_\lambda}_\#Q_n\xrightarrow{w} {T_\lambda}_\#Q = P^*.$
\end{ex}

\subsection{Practical computations}\label{subsec:praticcomp}

Depending on the constraint function $\Phi$ one is able to solve explicitly 
\begin{align}\label{eq:tlamb}
        T_\lambda(y): & ={\rm arg}\min_{x\in\R^d}  \left\{\|x-y\|^2-\lambda^\top \Phi(x) \right\}  ={\rm arg}\min_{x\in\R^d} \mathcal{H}(x,y)
\end{align}
and give the projected distribution. Here we discuss the linear case that is used in the experiments and other examples are referred to Appendix~\ref{ThirdAppendix}. When the computation of \eqref{eq:tlamb} is impractical to solve explicitly, we propose a method explained in Appendix~\ref{ap:extprat}.

Suppose $k\leq d$ and $\Phi(x) = (x_1, x_2,\ldots, x_k).$ In this case, $\Phi$ is strongly convex. If $\lambda_i<0$ for all $i=1,\ldots, k$ then $h(x) = \|x-y\|^2 -\sum_{i=1}^k\lambda_i\Phi_i(x)$ is strongly convex and we have existence and uniqueness of
\[\inf_x\left\{ \|x-y\|^2 - \sum_{i=1}^k\lambda_i\Phi_i(x)\right\}\] immediately guaranteed by finding a critical point. Let $h(x) = \|x-y\|^2 - \sum_{i=1}^k\lambda_ix_i$ and take derivative in $x,$ $\nabla h(x) = 2(x-y)-\sum_{i=1}^k\lambda_ie_i.$ The optimal $T_\lambda(y)$ is $T_\lambda(y) = \frac{\sum_{i=1}^k\lambda_ie_i}{2}+y.$

\begin{cor}\label{col:lin}(Linear case)
Suppose $Q = \frac{1}{n}\sum_{i=1}^n\delta_{Z_i}$ and $\Phi(x) = x_1$. Then $P^*$ is an optimal solution to problem \eqref{P}  if, and only if, it is defined as 
$P^*= \frac{1}{n}\sum_{i=1}^n\delta_{Z_i+\lambda^*e_1/2}$ where $t = \frac{1}{n}\sum_{i=1}^n(Z_i^1+ \lambda^*/2)$.
\end{cor}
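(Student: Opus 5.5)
The plan is to specialise Theorem~\ref{proj} to $k=1$, $\Phi(x)=x_1$ and an empirical $Q$. By Theorem~\ref{proj}, $P^*$ solves \eqref{P} if and only if $P^*={T_{\lambda^*}}_\#Q$ with $\lambda^*\in\R$ satisfying $t=\int \Phi(T_{\lambda^*}(y))\,dQ(y)$. So the work reduces to two things: identifying $T_\lambda$ explicitly, and then computing the push-forward and the constraint integral for a sum of Diracs.

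\textbf{Step 1: compute $T_\lambda$.} For fixed $y$, the function to minimise is
\[
h(x)=\|x-y\|^2-\lambda x_1 .
\]
This is a strongly convex quadratic for every real $\lambda$, since the linear term does not affect convexity. Hence $h$ has a unique minimiser, characterised by $\nabla h(x)=2(x-y)-\lambda e_1=0$, which gives
\[
T_\lambda(y)=y+\frac{\lambda e_1}{2}.
\]
This is exactly the computation done just before the corollary with $k=1$. The only care needed is to note that no sign restriction on $\lambda$ is required, because the term $-\lambda x_1$ is linear. This also settles existence and uniqueness of $T_\lambda$, as discussed in the Remark.

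\textbf{Step 2: push-forward and constraint.} With $Q=\frac1n\sum_{i=1}^n\delta_{Z_i}$, the push-forward is ${T_\lambda}_\#Q=\frac1n\sum_{i=1}^n\delta_{T_\lambda(Z_i)}=\frac1n\sum_{i=1}^n\delta_{Z_i+\lambda e_1/2}$. The constraint integral becomes
\[
\int \Phi(T_\lambda(y))\,dQ(y)=\frac1n\sum_{i=1}^n\Bigl(Z_i^1+\frac{\lambda}{2}\Bigr).
\]
Substituting $\lambda=\lambda^*$ into both expressions gives precisely the two conditions in the corollary, in both directions of the equivalence. Moreover, the constraint equation is affine in $\lambda^*$ with slope $1/2$, so it has the unique solution
\[
\lambda^*=2\Bigl(t-\frac1n\sum_{i=1}^n Z_i^1\Bigr),
\]
which confirms that such a $\lambda^*$ always exists.

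\textbf{Main obstacle.} There is little real difficulty once Theorem~\ref{proj} is taken as given. The only point to check is that its hypotheses apply: $\Phi$ should be continuous on $\Omega$ (it is linear), and the minimisation defining $T_\lambda$ should be over $\R^d$. If $\Omega$ is compact and the shifted points $Z_i+\lambda^*e_1/2$ leave $\Omega$, I would simply take $\Omega$ large enough to contain them, consistent with how $T_\lambda$ is defined over $\R^d$ in Theorem~\ref{proj}.
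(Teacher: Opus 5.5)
Your proposal is correct and follows essentially the paper's route. The paper obtains $T_\lambda(y)=y+\lambda e_1/2$ from the first-order condition of $h$ in the practical-computations subsection, pushes the empirical measure forward, and applies the constraint from Theorem~\ref{proj}. Your additions are correct refinements: no sign condition on $\lambda$ is needed, since the paper's $\lambda_i<0$ hypothesis is superfluous for linear $\Phi$, and $\lambda^*=2\bigl(t-\frac{1}{n}\sum_{i=1}^n Z_i^1\bigr)$ always exists.
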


Concerning image data, with $d = p\times p$ pixels, choose $\Phi(x) = \frac{1}{d}\sum_{i=1}^dx_i$. If we flatten each image to obtain a $d$-dimensional vector, the function $\Phi$ represents the average brightness of that image. With the same reasoning as in the linear case, we can project a set of images with an average brightness constraint. 

\begin{cor}\label{col:brig}
(Brightness constraint) Suppose $Q = \frac{1}{n}\sum_{i=1}^n\delta_{Z_i}$ and $\Phi(x) = \frac{1}{d}\sum_{i=1}^dx_i$. Then $P^*$ is an optimal solution to problem \eqref{P}  if, and only if, it is defined as 
$P^*= \frac{1}{n}\sum_{i=1}^n\delta_{Z_i+\frac{\lambda}{2d}}$ where $t = \frac{\lambda}{2d} + \frac{1}{n}\frac{1}{d}\sum_{i=1}^n\sum_{j=1}^dZ_i^j$.        
\end{cor}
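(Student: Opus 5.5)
The plan is to specialise Theorem~\ref{proj} to $\Phi(x)=\frac{1}{d}\sum_{j=1}^d x_j$ and $Q=\frac{1}{n}\sum_{i=1}^n\delta_{Z_i}$. As in the linear case, the whole argument comes down to computing the map $T_\lambda$ explicitly and then reading off the push-forward and the calibration equation. Here $k=1$, so the multiplier $\lambda$ is a scalar.

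\textbf{Step 1: compute $T_\lambda$.} Fix $y\in\R^d$ and set $h(x)=\|x-y\|^2-\frac{\lambda}{d}\sum_{j=1}^d x_j$. This function is the squared norm plus a linear term, so it is strongly convex for every $\lambda\in\R$ (no sign condition on $\lambda$ is needed). Hence, by the Remark following Theorem~\ref{proj}, the minimiser exists, is unique, and is characterised by $\nabla h(x)=0$. The gradient is $\nabla h(x)=2(x-y)-\frac{\lambda}{d}\1$, where $\1$ is the all-ones vector. Setting it to zero gives
\[
T_\lambda(y)=y+\frac{\lambda}{2d}\1,
\]
a uniform shift of every pixel by $\frac{\lambda}{2d}$; the statement writes this as $Z_i+\frac{\lambda}{2d}$.

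\textbf{Step 2: push-forward and calibration.} Since $Q$ is empirical, ${T_\lambda}_\#Q=\frac{1}{n}\sum_{i=1}^n\delta_{T_\lambda(Z_i)}=\frac{1}{n}\sum_{i=1}^n\delta_{Z_i+\frac{\lambda}{2d}\1}$. The calibration condition of Theorem~\ref{proj} reads $t=\int\Phi(T_\lambda(y))\,dQ(y)=\frac{1}{n}\sum_{i=1}^n\frac{1}{d}\sum_{j=1}^d\bigl(Z_i^j+\frac{\lambda}{2d}\bigr)$. Pulling the constant out of the double sum gives exactly $t=\frac{\lambda}{2d}+\frac{1}{n}\frac{1}{d}\sum_{i,j}Z_i^j$. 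This equation is affine in $\lambda$, so it always has the unique solution $\lambda^*=2d\bigl(t-\frac{1}{nd}\sum_{i,j}Z_i^j\bigr)$. By the ``if and only if'' in Theorem~\ref{proj}, $P^*$ is optimal exactly when it equals this push-forward with $\lambda=\lambda^*$, which is the claim.

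\textbf{Main obstacle.} The only delicate point is checking that the hypotheses implicitly used in Theorem~\ref{proj} hold. First, $T_\lambda$ must be well defined and single-valued; this is settled by the strong convexity in Step 1. Second, there is the compactness of $\Omega$: a shifted image may leave the pixel range. I would handle this either by taking $\Omega$ large enough to contain the shifted support, or by noting that Theorem~\ref{proj} is applied with the argmin over all of $\R^d$, as in the linear case. The remaining computations are routine.
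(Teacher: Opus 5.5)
Your proposal is correct and follows the paper's route. The paper also obtains the corollary ``with the same reasoning as in the linear case'': it computes the critical point $T_\lambda(y)=y+\frac{\lambda}{2d}\1$ of the strongly convex function $h$, then reads off the push-forward and the calibration equation from Theorem~\ref{proj}. Two of your remarks go beyond what the paper says and are accurate: no sign condition on $\lambda$ is needed, and $\Omega$ must be large enough to contain the shifted points.
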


\begin{rem}
Although we did not performed the experiments for contrast perturbation, we can easily extend our findings by choosing $\Phi(x) = \frac{1}{d}\sum_{i=1}^n(x_i-mean(x_i))^2$ or similar. In that case, we could investigate how pixel variance may impact the task of image classification.
\end{rem}

\subsection{Dual formulation}\label{sec:dual}

Based on standard duality theory for convex programs, we proved a dual formulation for our projection problem \eqref{P}. As already mentioned, one can find conditions for the dual formulation of a general convex problem in \citep{Peypouquet}. Additional details and proofs can be checked in the Appendix~\ref{A2} and \ref{A3}. 

\begin{teo}\label{dual}
    The dual problem of \eqref{P} is
    \begin{align*}\label{D}
    &\max_{\lambda\in\R^k} \inf_{\mathcal{P}(\Omega)}\left\{\W_2^2(P,Q) - \lambda^\top\left(\int\Phi dP-t\right)\right\}\\
    &= \max_{\lambda\in\R^k}\left\{\lambda^\top t + \int_\Omega\inf_{x\in\Omega}\{\|x-y\|^2- \lambda^\top\Phi(x)\}
    dQ(y)\right\}.
\end{align*}
\end{teo}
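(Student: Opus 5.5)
The plan has two parts. First, derive the Lagrangian dual of \eqref{P}; then compute the inner infimum explicitly, which yields the second expression in Theorem~\ref{dual}.

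\textbf{Step 1 (Lagrangian).} For the first line, I would form the Lagrangian
\[L(P,\lambda)=\W_2^2(P,Q)-\lambda^\top\left(\int\Phi\,dP-t\right)\]
on $\mathcal{P}(\Omega)\times\R^k$. By construction, $\sup_\lambda L(P,\lambda)$ equals $f(P)$ if $P\in D$ and $+\infty$ otherwise, so \eqref{P} is $\inf_P\sup_\lambda L$. The dual is then $\sup_\lambda\inf_P L$ by definition.

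To write $\max$ and to have no duality gap, I would invoke the convex duality theorem in normed spaces from \citep{Peypouquet}, recalled in Appendix~\ref{A}. The ingredients are the following. The function $f$ is convex and weakly lower semicontinuous, via the Kantorovich formula as a supremum of linear functionals. The constraint map $g$ is affine and continuous, since $\Phi$ is continuous on the compact set $\Omega$. The feasible set is nonempty, and I would use a Slater-type qualification: $t$ lies in the relative interior of $\{\int\Phi\,dP : P\in\mathcal{P}(\Omega)\}=\mathrm{conv}\,\Phi(\Omega)$. Since $\mathcal{P}(\Omega)$ is weakly compact, the primal infimum is attained. Attainment of the dual maximum follows from the qualification condition, since the set of multipliers is then nonempty and bounded. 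Alternatively, Theorem~\ref{proj} already supplies a $\lambda^*$ that works as a multiplier.

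\textbf{Step 2 (inner infimum).} For fixed $\lambda$, I would compute
\[\inf_{P}\left\{\W_2^2(P,Q)-\lambda^\top\int\Phi\,dP\right\}+\lambda^\top t.\]
Write $\W_2^2(P,Q)=\min_{\pi\in\Pi(P,Q)}\int\|x-y\|^2d\pi$. Since $\int\Phi\,dP=\int\Phi(x)\,d\pi(x,y)$, the joint infimum over $P$ and $\pi$ becomes an infimum over all couplings $\pi$ whose second marginal is $Q$, with first marginal free:
\[\inf_{\pi:\,(\mathrm{pr}_2)_\#\pi=Q}\int\left(\|x-y\|^2-\lambda^\top\Phi(x)\right)d\pi(x,y).\]

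\emph{Lower bound.} The integrand is at least $c_\lambda(y):=\inf_{x\in\Omega}\{\|x-y\|^2-\lambda^\top\Phi(x)\}$. Hence the value is at least $\int c_\lambda\,dQ$.

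\emph{Upper bound.} Since $\Omega$ is compact and the integrand is continuous, the infimum defining $c_\lambda(y)$ is attained. A measurable selection theorem (Kuratowski--Ryll-Nardzewski) gives a measurable minimizer $T_\lambda(y)$. The coupling $\pi=(T_\lambda,\mathrm{Id})_\#Q$ then attains $\int c_\lambda\,dQ$. Note that $c_\lambda$ is continuous as an infimum of an equicontinuous family, so it is integrable.

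Combining the two bounds gives the stated formula.

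\textbf{Main obstacle.} I expect the difficulty to lie in Step 1, not in the computation of Step 2: justifying zero duality gap and attainment of the $\max$ in the infinite-dimensional space $\mathcal{M}(\Omega)$ with the weak topology. I would handle this by treating the problem as a finite-dimensional perturbation. Define the value function $v(s)=\inf\{f(P):\int\Phi\,dP=s\}$. It is convex in $s\in\R^k$, and it is finite and continuous near $t$ under the interiority assumption. A subgradient $-\lambda^*\in\partial v(t)$ then exists, and it yields both strong duality and a dual maximizer. This is consistent with the characterization in Theorem~\ref{proj}.
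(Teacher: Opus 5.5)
Your proof is correct, but it reaches the identity by a different route from the paper.

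\emph{The paper's route.} It never evaluates the inner infimum directly. It invokes Theorem~\ref{teo:rockdual} and uses Lemma~\ref{conj_cons} to force $\psi_i^*=\Phi_i$, which collapses the dual variables to $\lambda\in\R^k$ and gives the first line. It then substitutes the conjugate $f^*(\psi)=-\int\psi^c\,dQ$ from Lemma~\ref{conju_was} to get the second line. Strong duality $\inf_{P\in D}\W_2^2(P,Q)=\max_\lambda(\cdots)$ is meant to come packaged with the abstract theorem.

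\emph{Your route.} You get the first line from the Lagrangian by definition. You then prove equality of the two dual objectives for each fixed $\lambda$: you free the first marginal of the coupling and sandwich between the pointwise bound $c_\lambda$ and the coupling $(T_\lambda,\mathrm{Id})_\#Q$.

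This is more elementary, and in fact more complete. The paper's proof of Lemma~\ref{conju_was} only justifies $f^*(\psi)\le-\int\psi^c\,dQ$: choosing $\varphi=\psi^*$ bounds the inner infimum from above but does not compute it. Your push-forward construction is exactly the missing reverse inequality.

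The paper's route is supposed to deliver zero gap and dual attainment at no extra cost. But Theorem~\ref{teo:rockdual} presupposes that Lagrange multipliers exist, and the paper infers this from linearity of the constraint alone. Your hypothesis $t\in\mathrm{ri}\,\mathrm{conv}\,\Phi(\Omega)$ is therefore not cosmetic. Take $\Omega=[-1,1]$, $\Phi(x)=x^2$, $Q=\delta_{1/2}$, $t=0$:
\begin{itemize}
\item only $\delta_0$ is feasible, so the primal value is $1/4$;
\item $\inf_{x\in\Omega}\{(x-\tfrac12)^2-\lambda x^2\}<1/4$ for every $\lambda$, and tends to $1/4$ only as $\lambda\to-\infty$;
\item so the dual supremum is not attained.
\end{itemize}
The same example violates the necessity direction of Theorem~\ref{proj}, since $T_\lambda(1/2)\neq 0$ for every $\lambda$. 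Your fallback through that theorem therefore carries the same hidden assumption.

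Two minor slips. First, with your Lagrangian the dual maximizers are the $\lambda^*\in\partial v(t)$, not $-\lambda^*$. Second, under relative interiority alone the multiplier set is nonempty but need not be bounded: when $\mathrm{conv}\,\Phi(\Omega)$ has empty interior, the dual function is invariant under adding vectors orthogonal to the direction of its affine hull. Nonemptiness is all you actually use.
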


The dual formulation\footnote{This result is closely related to DRO based on Wasserstein distance. It has deep connections to  adversarial training in Machine Learning \citep{DRO}.}, Theorem~\ref{dual}, plays a key role in enabling efficient computation of the projection. While the closed-form solution can be directly applied in the cases explored here, the dual enables scalable optimization when $\Phi$ has a more complex structure or acts in high-dimensional spaces. In particular, the dual solves an optimization in $\lambda\in\R^k$, where $k\ll d$, significantly reducing complexity for high-dimensional feature spaces. This paves the way for future extensions to unstructured data such as text or high-resolution images.

\section{Explaining models via Wasserstein projection}\label{sec:appml}

In this section, we connect the theoretical framework to the experimental procedure. 
We apply Theorem~\ref{proj} in the empirical setting where 
\( Q = \frac{1}{n} \sum_{i=1}^n \delta_{Z_i} \) represents the dataset. 
In this case, the optimal solution \(P^*\) corresponds to a push-forward of the empirical measure, 
meaning that each data point \(Z_i\) is mapped to a perturbed version \(T_{\lambda^*}(Z_i)\).

The calibration parameter \(t\) in Theorem~\ref{proj} specifies the desired value of the feature statistic 
(e.g., the mean of a given variable). In practice, we parametrize \(t\), which allows us to generate a family of perturbed datasets by continuously varying the constraint.

Therefore, the experiments consist of computing the transformed dataset 
\(\{T_{\lambda^*}(Z_i)\}_{i=1}^n\) for different values of calibration parameter $t$, and evaluating how the model predictions change under these controlled distributional shifts.

\subsection{Explainability measures}

Our analysis can be divided in three steps. 
Let $Z_1, \ldots, Z_n$ denote the original dataset and $Z_1^*, \ldots, Z_n^*$ the projected one. As given by Theorem~\ref{proj}, $T_{\lambda^*}(Z_i) = Z_i^*$. The first step of our investigation is to train the models in $Z_1, \ldots, Z_n$, then compute $Z_1^*, \ldots, Z_n^*$, and finally compute the explainability measures for the projected dataset. 

In the binary classification setting,  we are going to compute the portion of predicted 1's, $\mathrm{PP1}$. Concerning the multi-class classification task explored in the image dataset, with $q$ different categories, we consider the average positive classification within each class $j$, that is $\mathrm{PP1_j}$, for $j=1, \ldots, q$:

\begin{equation}\label{eq:pp1}\mathrm{PP1} = \frac{1}{n}\sum_{i=1}^n f(Z_i^*) \text{ and } PP1_j = \frac{1}{n}\sum_{i=1}^n \mathbbm{1}_{\{f(Z_i^*)=j\}}.
\end{equation}

Analyzing the evolution of these measures with respect to calibration parameter provides insight into the learned outputs. We can understand which variable produces the increase or decrease of labels equal to 1 in the case of classification. Plotting the explainability measure in \eqref{eq:pp1} highlights the influence that a covariate had for the decision rule.

In particular, the monotonicity of the response with respect to the stress parameter $\tau$ provides a meaningful and interpretable signal of how model predictions evolve under controlled shifts. In this context, we interpret a feature as having a positive influence when its associated response curve increases with  $\tau$. Furthermore, the relative position of the curves provides a notion of importance, as features whose curves consistently lie above others induce larger changes in the model output and can therefore be interpreted as more influential.




\subsection{Parametrizing perturbations with quantiles}

To capture the mixed behavior of different variables with different ranges we need to introduce a parametrization parameter for the calibration $t$. 
The perturbation procedures adopted in the experiments are detailed below. 

If $X\in\R^d$  take $j_0\in\{1, \ldots, d\}$ and define $\Phi(X) = X^{j_0}$. The idea is to compute the stressed mean as $t = m_{j_0}+\epsilon_{j_0, \tau}$, where $m_{j_0} = \frac{1}{n}\sum_{i=1}^nX_i^{j_0}$ and $\epsilon_{j_0, \tau}$ is parametrized \citep{Bachoc2022} as follows
\begin{equation}\label{tau}
    \epsilon_{j_0, \tau} = \begin{cases}
        \tau (m_{j_0} - q_{j_0}(\alpha)), \text{ if } \tau \in[-1,0),\\
        \tau (q_{j_0}(1-\alpha)-m_{j_0}), \text{ if } \tau \in (0,1],\\
        0, \text{ if } \tau = 0,
        \end{cases}
\end{equation}
where $q_{j_0}(\alpha)$ is the $\alpha$- empirical quantile of the variable $X_{j_0}$ and $\alpha\in(0,1/2)$. In that case, we have $q_{j_0}(\alpha)\leq m_{j_0}\leq q_{j_0}(1-\alpha)$ so the stressed mean satisfies $t\in[q_{j_0}(\alpha),q_{j_0}(1-\alpha)]$.

This parametrization not only allows one to understand the mixed behavior of different variables with different ranges but also improves realism and comparability across datasets and attributes.
Specifically, the parameter $\tau$ parametrizes the amount of stress whatever the distribution of the $\{X_{j_0}^1, \ldots, X_{j_0}^n\}$ is. The stressed perturbation goes from the small quantile $q_{j_0}(\alpha)$, when $\tau = -1$, then pass to $m_{j_0}$, when $\tau = 0$, and becomes the large quantile $q_{j_0}(1-\alpha)$, when $\tau = 1.$

\begin{rem}
Although we can not compute accuracy or any error metric since we have no true labels, we are able to check prediction stability under shifts. Details are discussed in Appendix~\ref{ap:predsta}.    
\end{rem}

\section{Experiments}\label{sec:exp}

The goal of our experiments is to analyze how different learning algorithms respond to controlled distributional shifts induced by Wasserstein projections. We follow a common protocol across all settings. First, each model is trained on the same training dataset. Second, we generate projected versions of the test dataset, parametrized by $\tau$. Finally, we analyze the model responses by computing and visualizing the proposed explainability criteria on the perturbed datasets. Throughout all experiments\footnote{The choice of 21 values for $\tau$ is a visualization resolution choice and $\alpha = 0.05$ is a typical value widely used for instance in confidence levels, trimming and robustness computations.}, we fix $\alpha = 0.05$ and consider 21 uniformly spaced values of $\tau$ in $[-1, 1]$. Additional analysis can be found in Appendix~\ref{ap:extprat}, including the regression setting, comparison to other methods and computational complexity.




\subsection{Two-class classification}\label{subsec:expclassic}

We consider the {\it Adult Income} dataset with $n = 48842$ observations with $d = 37$ features, randomly split in 80\% for training and 20\% for test. It describes numerical and categorical characteristics such as \textit{Age}, \textit{Capital Gain} and \textit{Loss}, \textit{Occupation}, \textit{marital status}, \textit{gender} and so on. Following Corollary~\ref{col:lin}, we want to understand what is the influence of each feature when classifying high income (more than 50000\$ per year).

We trained Gradient Boosting, Decision Tree, LDA and Naive Bayes.
\begin{figure}[t]
    \centering
    \includegraphics[width=\textwidth]{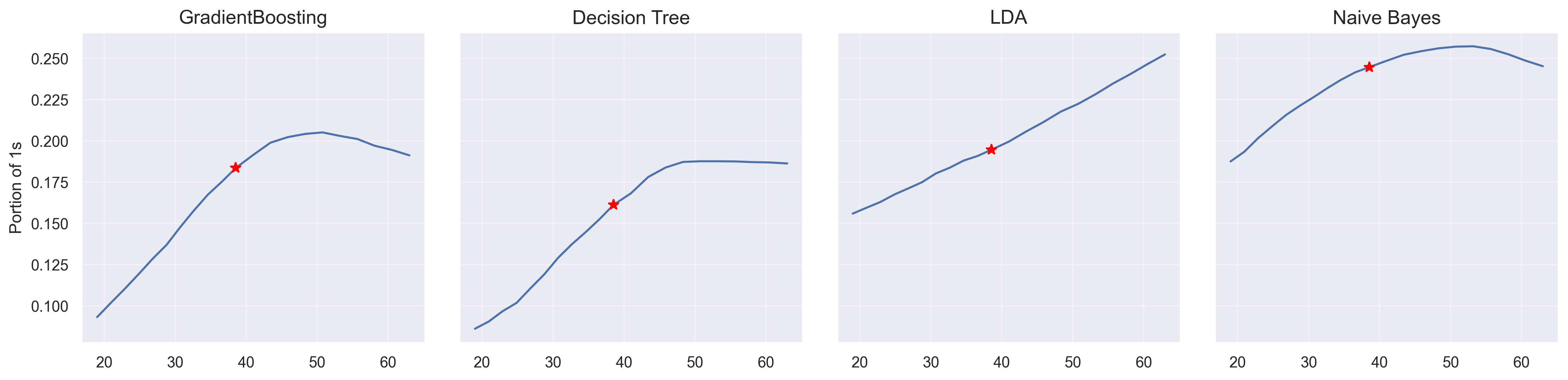}
    \caption{Impact of a feature in positive classifications: portion of predicted 1's as a function of mean {\it Age}. Red marker represents original dataset.}
    \label{fig:mean_class}
\end{figure}
As shown in Figure~\ref{fig:mean_class}, the feature \textit{Age} has a beneficial impact in positive classifications ($\geq$ 50k income) up to an age close to 50. On the other hand, observing Figure~\ref{fig:mult_clas}, the impact of \textit{Education-Num} is clear. As higher \textit{Education-Num} is, the chance of having an income greater than 50k is bigger. Interestingly, the model Naive Bayes points out \textit{Capital Gain} as the ntial variable: having a big amount of money in bank account is an obvious evidence of high income.

\begin{rem}
We can also produce simultaneous shifts in multiple features in the same fashion taking $k>1$. An example stressing \textit{Education-Num} and \textit{Hours per week} is plotted in Figure~\ref{fig:two_gb}. To handle more sophisticated constraints, it is a matter of solving the optimization problems discussed in Section~\ref{subsec:praticcomp}. Detailed computations for different examples can be checked in Appendix~\ref{ThirdAppendix}.
\end{rem}

\begin{figure*}[t]
\centering

\begin{subfigure}{\textwidth}
    \centering
    \includegraphics[width=\textwidth]{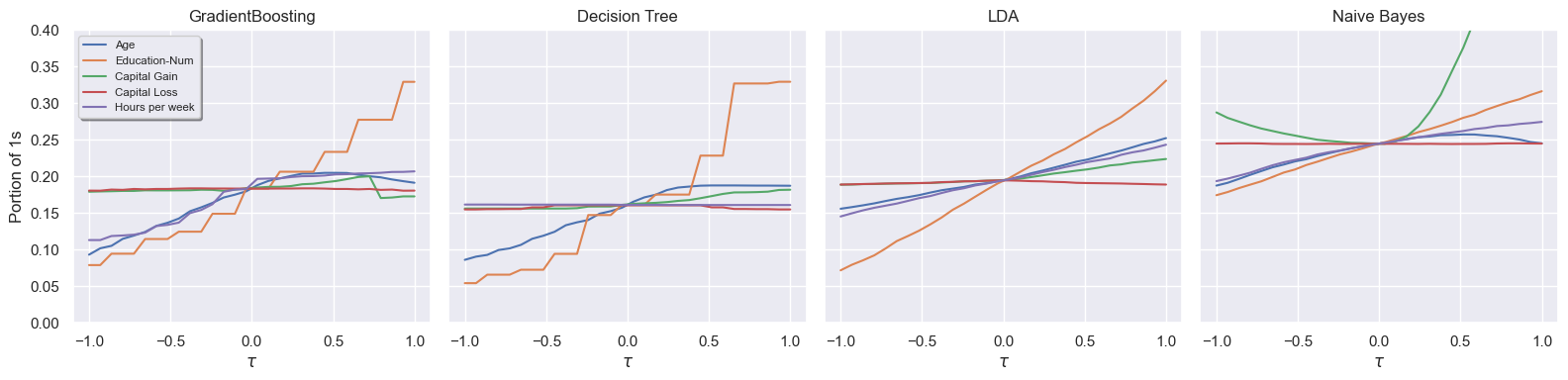}
    \caption{}
    \label{fig:mult_clas}
\end{subfigure}

\vspace{0.5em}

\begin{subfigure}{0.48\textwidth}
    \centering
    \includegraphics[width=1.1\linewidth]{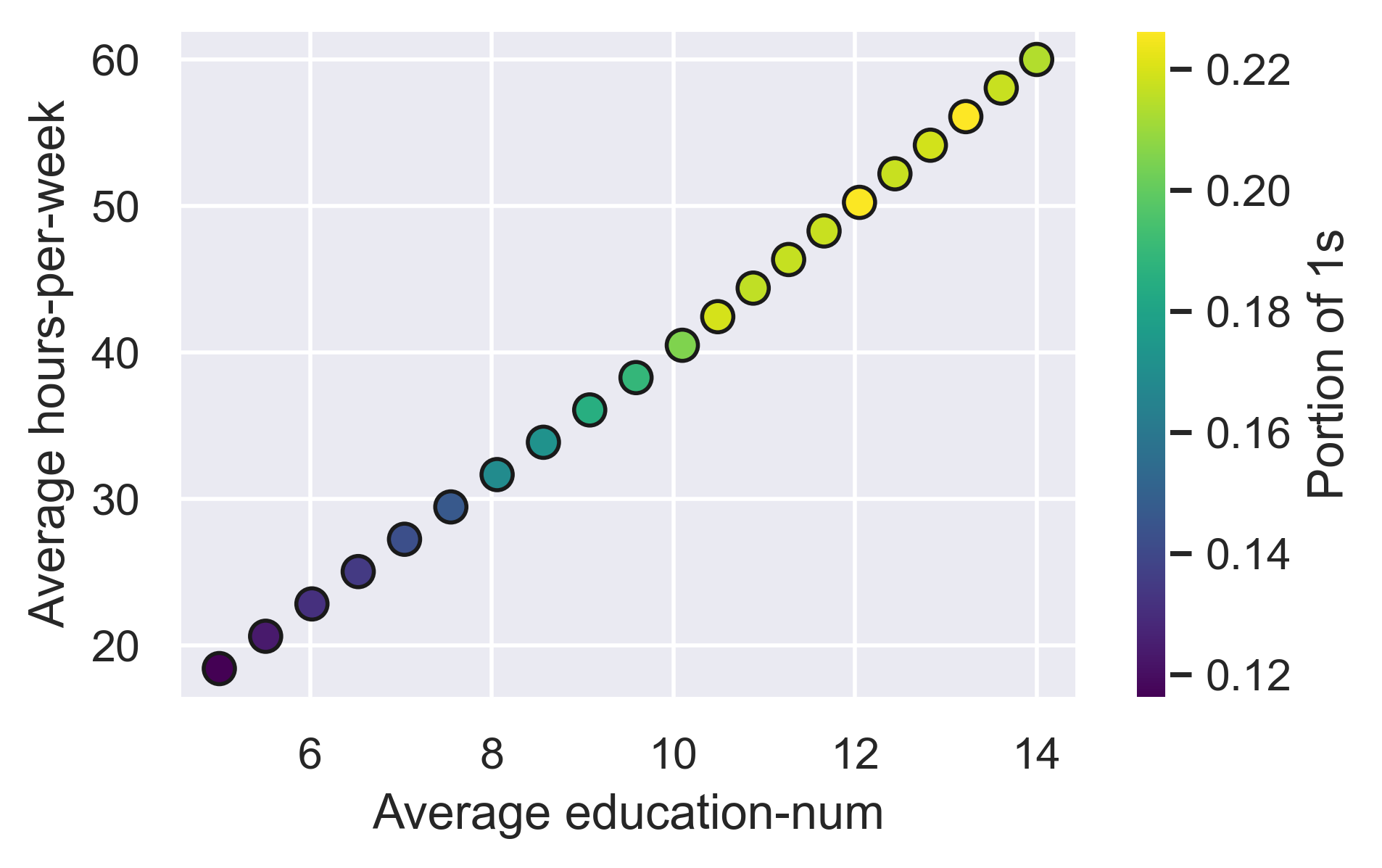}
    \caption{}
    \label{fig:two_gb}
\end{subfigure}
\hfill
\begin{subfigure}{0.48\textwidth}
    \centering
    \includegraphics[width=0.9\linewidth]{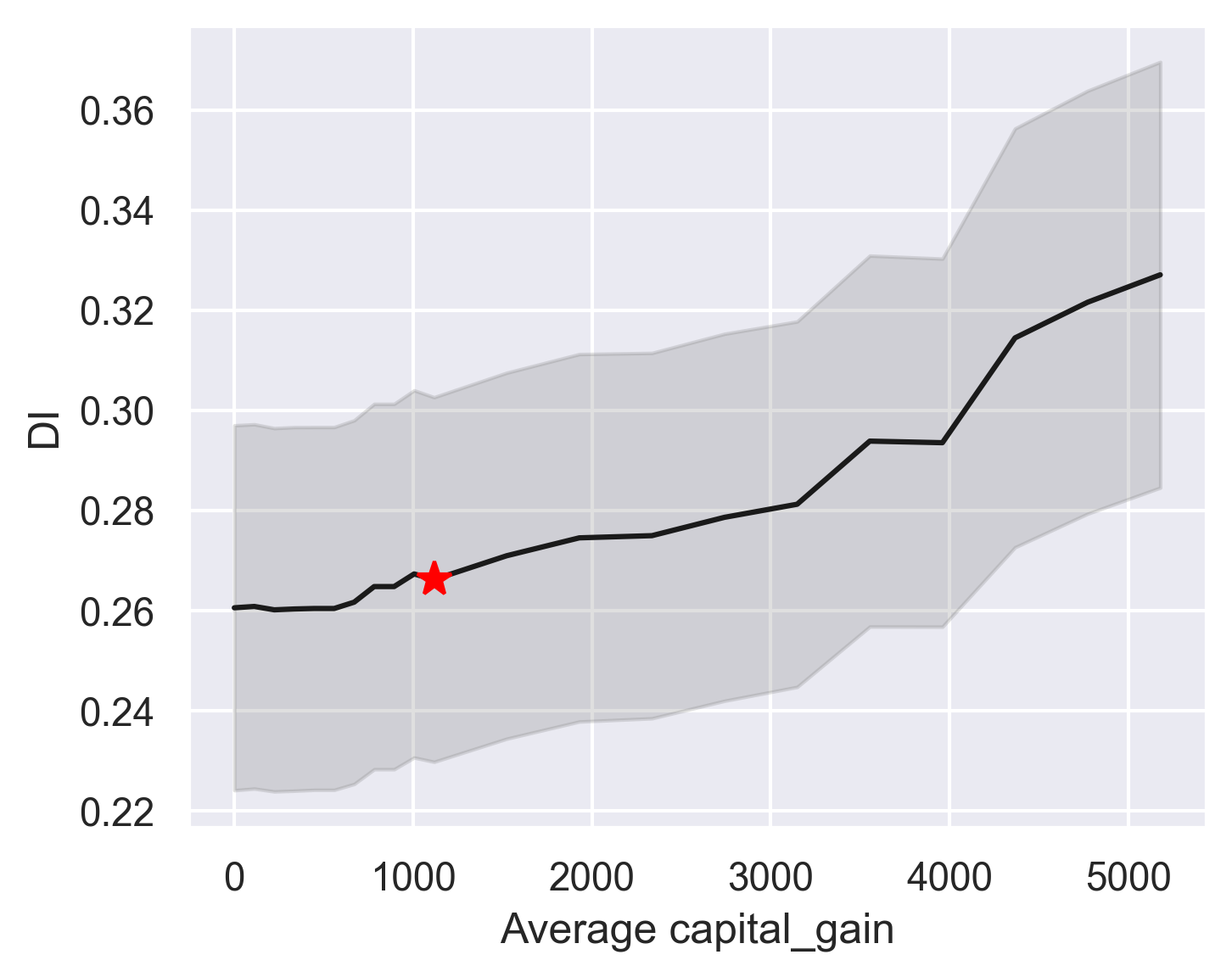}
    \caption{}
    \label{fig:di}
\end{subfigure}

\caption{\textbf{(a)} Multiple mean changes impact on higher income ($\geq$50k). As $\tau$ increases (respectively decreases), the values of the selected feature increases (respectively decreases). $\tau = 0$ represents no projection. \textbf{(b)} Simultaneous mean perturbations for Gradient Boosting indicating a positive joint influence of features \textit{Education-Num} and \textit{Hours per week} on positive classifications. Color scale represents the proportion of predicted 1's. \textbf{(c)} Disparate Impact variation with respect to \textit{Capital Gain} mean constraint. The red star represents the DI of the original dataset and the gray area denotes the confidence interval.}

\label{fig:main_plus_two}
\end{figure*}
\subsubsection{Fairwashing}
In the classification setting, an important measure to evaluate the fairness of a decision $Y\in\{0, 1\}$ made by an algorithm $g$ using the dataset $X$ is the Disparate Impact

\begin{equation}\label{eq:di}\mathrm{DI}(g) = \frac{\mathbb{P}\left(g(X) = 1 \mid S = 0\right)}{
    \mathbb{P}\left(g(X) = 1 \mid S = 1\right)
    }.
\end{equation}

It quantifies how spread is the positive classification among the two subgroups given by the sensitive attribute $S$. It is desirable to have a $DI$ as close as possible to 1, meaning that the classifications are more or less similar between subgroups. 

Here we highlight how the distribution shift may falsely indicate a fair algorithmic decision. After computing the projected datasets with constrained means as described in Section~\ref{sec:appml}, compute the disparate impact \eqref{eq:di} in each of these new datasets to investigate the impact of some features on this fairness criteria. We computed the Disparate Impact with a confidence interval as in \citep{disparate}. In our case, the sensitive attribute is gender, where $S = 0$ means female and $S = 1$ means male.

Figure~\ref{fig:di} plots the impact of perturbing the feature \textit{Capital Gain}. We can see that increasing the average \textit{Capital Gain} also increases the fairness criteria. Note that the mean calibration is done regardless the feature \textit{gender}. This process is the so-called \textit{fairwashing}.

This phenomen is widely explored in  \citep{illusionfairnessauditing}. The distribution shift in a Wasserstein ball may create a representative-looking sample from an original distribution that falsely meet a fairness criterion, thereby masking the bias present in an algorithmic decision.  

In this work, however, we are discussing a diagnostic tool. Our framework is designed to expose potential vulnerabilities, including scenarios where fairness metrics can be artificially improved. In this sense, our approach aims to reveal, rather than mitigate, such issues.






\subsection{Image classification: brightness perturbation on Fashion-MNIST}\label{subsec:imageclass}
Following Corollary~\ref{col:brig}, we investigate the impact of classifying an image after brightness stress. In our Wasserstein-constrained perturbation setting, this amounts to add a constant offset to all pixels, i.e. a global brightness shift. Note that we are not interested in 'pixel importance', but in showing how perturbing a fundamental aspect of an image, such as average brightness or contrast, can be relevant to a classifier output. Bear in mind that both brightness and contrast are global characteristics of an image set, which aligns well with our global explainability framework.

The dataset \textit{Fashion-MNIST} \citep{fashionmnist} consists of $n=70000$ grayscale images with $d = 28\times28$ pixels representing $10$ fashion categories such as \textit{T-shirt/top, Trouser, Pullover, Dress, Coat, Sandal } and others. 

We trained a small CNN with three convolutional blocks (390410 trainable parameters) and no normalization/standardization layer. We achieved $91\%$ test accuracy for 8 epochs, which provided a reasonably strong yet simple baseline for our stress-testing experiments.

We adopted the same procedure as before with one additional clipping step to ensure a valid pixel value. The explainability measure is the average positive classification within each class \eqref{eq:pp1}.

\begin{figure}
    \centering
\includegraphics[width=\linewidth]{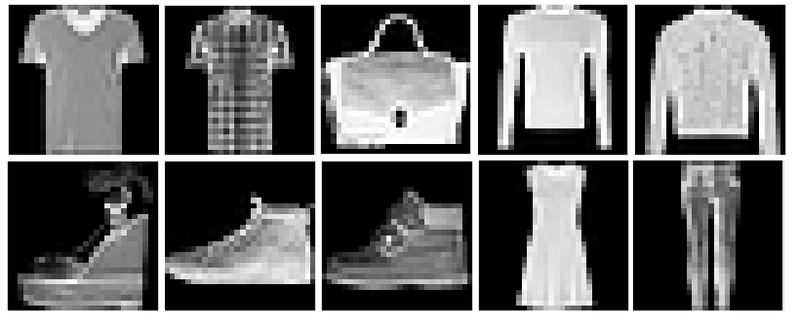}
    \caption{Random test sample of fashion classes. First row shows from left to right \textit{T-shirt/top, Shirt, Bag, Pullover } \textit{and Coat}. Second row displays \textit{Sandal, Sneakers, Ankle Boots, Dress, Trouser.}}
    \label{fig:testsample}
\end{figure}
As shown in Figure~\ref{fig:testsample}, there are groups of classes  where the intra-similarity is very high: \textit{Sandals, Sneakers}  and \textit{Ankle Boots},\textit{T-shirt/top } and \textit{Shirt}, \textit{Pullover} and \textit{Coat}. Finer details in the diagonal may differ a \textit{Sandal} from a \textit{Sneaker}, but if the \textit{Sandal} has high hills, it looks like an \textit{Ankle boot}. Therefore, perturbing the pixel intensity may confuse the model in classifying such classes. It also indicates how external factors such as
lighting changes may affect a model output differently for each class.
\begin{figure}
    \centering
\includegraphics[width=\linewidth]{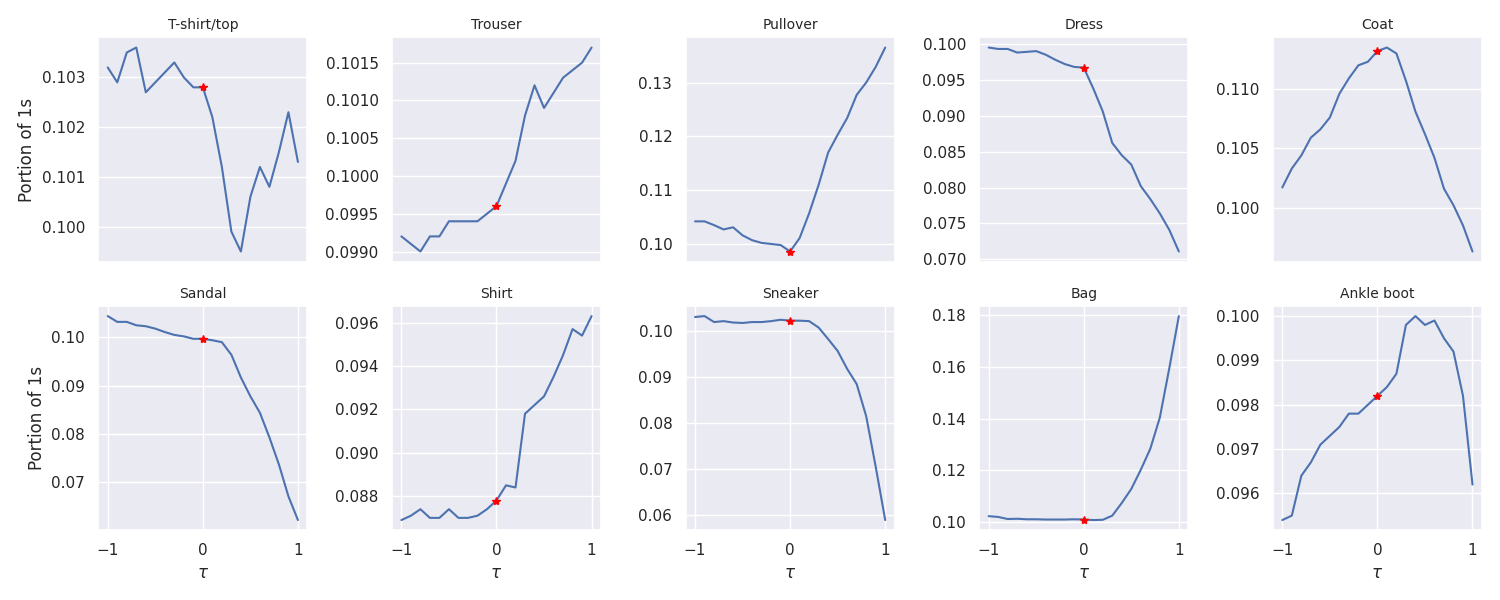}
    \caption{Portion of positive classifications for each class by perturbation parameter $\tau$. Red marker represents original dataset.}
    \label{fig:stressed}
\end{figure}
Inspecting Figure~\ref{fig:stressed} we observe that phenomena: as $\tau$ gets close to $1$, the model is less probable to classify a \textit{Sandal} and a \textit{Sneaker}. 

On the other hand, the probability of classifying an \textit{Ankle boot} increases. The opposite behavior also occurs for \textit{Pullover} and \textit{Coat}. As the \textit{Coat} images get darker, the space between the arms becomes less distinguishable. It is interesting to see that increasing the brightness helps the model on classifying \textit{Trouser}, which can be explained by the middle space between the legs. Brighter images drastically improve the classification of \textit{Bags}, since the handle of the bag becomes clearer. In Figure~\ref{fig:stressed2} we plotted a random sample of  test images to compare the original ($\tau=0$) with the darker ($\tau=-1$) and brighter ($\tau=1$) image.


q\section{Conclusion}
\begin{figure}
    \centering
    \includegraphics[width=\linewidth]{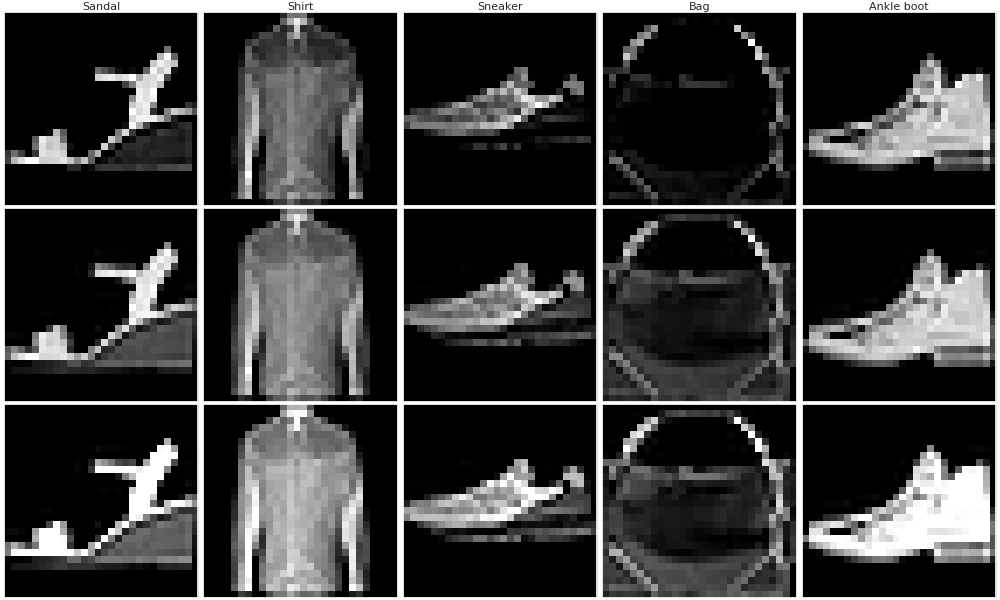}
    \caption{Perturbation of test sample of classes \textit{Sandal, Shirt, Sneaker, Bag} and \textit{Ankle boot} (from left to right). First row shows the darker perturbation $(\tau = -1)$, middle row is the original image $(\tau = 0)$ and third row is the brightest perturbation $(\tau = 1)$.}
    \label{fig:stressed2}
\end{figure}
\textbf{Discussion.}
Our approach provides counterfactual explanations by generating synthetic data points that satisfy user-specified constraints while remaining close to the original data distribution, ensuring in-distribution plausibility. Unlike naive interventions that produce out-of-distribution artifacts or standard feature-importance methods that rely on local attributions or aggregate error measures, our framework enables controlled and feasible distributional modifications that isolate and quantify feature effects at the population level. 

In contrast to resampling-based approaches, which only reweight observed data, our method generates genuinely new synthetic individuals, allowing a richer exploration of counterfactual scenarios. We further extend the framework to image data through pixel-level perturbations, though we note that optimal transport in high-dimensional pixel spaces is computationally expensive and may not fully capture the underlying geometric structure of images.

Finally, we propose a general methodology to craft stress-test distributions, with Wasserstein-distance constraints ensuring that each crafted distribution is a realistic variant of the original. By explicitly studying the impact of actual feature modifications under distributional plausibility constraints, our framework yields interpretable and actionable explanations. Besides that, our framework does not correct the vulnerabilities it reveals; it is a diagnostic tool meant to complement robust training.

\textbf{Limitations.} In the binary classification and regression setting we chose tree-based algorithms and generative models because they are insensitive to feature scaling. Since the Wasserstein projection corresponds to shifting the data, our analysis would be compromised in algorithms that require standardization before training. Despite that limitation, there is no concern in the setting where the function $\Phi$ does not results in shifting the dataset.

Our framework does not yet provide explicit quantitative criteria to systematically characterize feature influence behaviors. Future work could formalize these notions by introducing metrics that capture monotonicity, total variation, or curvature of the response curves, as well as quantitative measures of fairness distortion based on the variation of fairness indicators across $\tau$.

\textbf{Future work.} We plan to reproduce the brightness perturbation scheme in classification tasks that shows disparities across demographic attributes such as gender and ethnic origin. Additionally, we aim to apply our modification scheme in a learned embedding space, or in a factorized concept space \citep{fel2021look}. That approach is  also suitable for unstructured data, such as text. Computing counterfactuals in such spaces would provide an alternative to commonly used importance indicators such as SHAP values or Sobol indices, and would yield genuinely interventional explanations at this level of concepts. 

\bibliography{references}
\bibliographystyle{iclr2026_conference}

\appendix

\section{Extended Practical Computations}\label{ap:extprat}

Here we propose a method to compute the solution of the optimization program \ref{eq:tlamb} when there it has no closed form by hand, as explained in Section~\ref{subsec:praticcomp}. We can  replace the exact solution by an approximation obtained using  a gradient descent. Additional details are provided in the appendix.

    $$ x^{t+1}=x^\top-\eta_t \nabla \mathcal{H}(x^t)$$
$$\nabla \mathcal{H}(x)= \nabla \|x-y\|^2- \lambda^\top \nabla \Phi(x) $$
It thus requires to be able to compute the quantity $\nabla \Phi(x)$. Note that in the difficult case where the constraint is laid on the neural network $\Phi(x)=f(x)$ (maybe the mean of the scores),  using that
$$
\underbrace{\nabla_{x} \mathcal{L}(f(x),y):=\left(\nabla_{y} \mathcal{L}\right) \frac{\partial f}{\partial x}}_{{\rm backpropagate} \: {\rm gradients}}
$$ 
the gradient in the gradient descent algorithm can be approximated, as soon as  we have stored the gradients during the learning phase.

\section{Additional Experimental Analysis}
\subsection{Regression}\label{ap:reg}
For the regression case, we are interested in the average and variance criterion

\begin{equation}\label{eq:regmean}M = \frac{1}{n}\sum_{i=1}^n f(X_i^*) \text{ and } Var = \frac{1}{n}\sum_{i=1}^n (f(X_i^*)-M)^2.
\end{equation}

The evolution of the mean criteria indicates how a change in a specific feature affects the output. The variance criteria on the other hand, shows stability with respect to projection of the dataset. 
\begin{figure}[tbp]
\centering
\includegraphics[width=\linewidth]{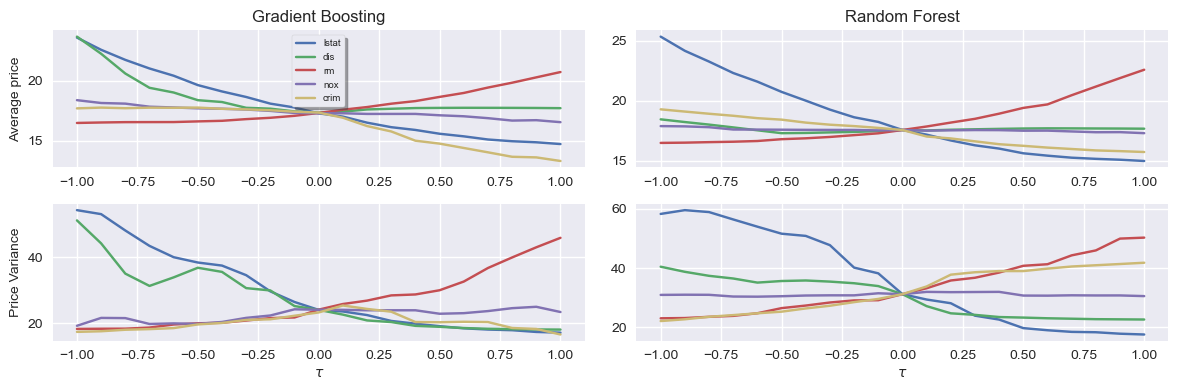}
\caption{\textbf{(Top - Average price)}  average of price predicted for projected datasets with respect to $\tau$. \textbf{(Bottom - Price variance)} with respect to $\tau$. There is no projection when $\tau=0$. The higher (respectively lower) value for $\tau$, the higher (respectively lower) the value of the feature.}
\label{fig:reg}
\end{figure}

The \textit{Boston Housing} dataset contains $n=506$ observations and $d = 12$ features. Each point represents houses in Boston and the target is the price of the house to be sold. The variables we are interested in are \textit{lstat, rm, dis, crim} and \textit{nox}. Respectively, they represent percentage of lower status of the population, average number of rooms per dwelling, weighted distances to five Boston employment centers, per capita crime rate by town and nitric oxides concentration (parts per 10 million).

We trained Gradient Boosting and Random Forest in randomly chosen 80\% of the observations and tested in 20\% remaining ones. Two important curves draw the attention when looking at Figure~\ref{fig:reg}: the red and the blue ones, related respectively to \textit{lstat} and \textit{rm}. It's reasonable that more rooms in a house make an increase in price. The feature \textit{lstat} has the opposite influence: lower values of it forces a decrease in price. The other features are less influential, since there is no big change in the average price with different values of $\tau$. It is clear that these curves are almost horizontal. When looking at price variance, another feature seems to create some instability: \textit{dis}. Houses closer to Boston employment centers, $\tau<0$, have heterogeneous prices while houses far to Boston employment centers, $\tau>0$, have less variability in prices. 

\subsection{Prediction stability under shift}\label{ap:predsta}

Our framework also represents a robustness test with respect to mean constraint. Here we are comparing the labels of the projected data with the original one and plotting the percentage of unchanged predictions of the test set by perturbation parameter $\tau$. Inspecting the first plot of Figure~\ref{fig:pred_stab}, we can see that increasing the brightness level on the \textit{Fashion-MNIST} experiment changes almost $20\%$ of the labels while decreasing brightness does not represent great impact. The second and third curves, related to \textit{Adult Income} dataset, represent respectively the prediction stability with respect to $\tau$ when perturbing the features \textit{Age} and \textit{Education-Num}. The experiments in Section~\ref{subsec:expclassic} indicate both features as highly relevant to the model, which can be now confirmed as we can see that perturbing both of them (regardless the sign of $\tau$) results in changes of nearly $10\%$ of the labels.

\begin{figure}
\centering
\includegraphics[width=\linewidth]{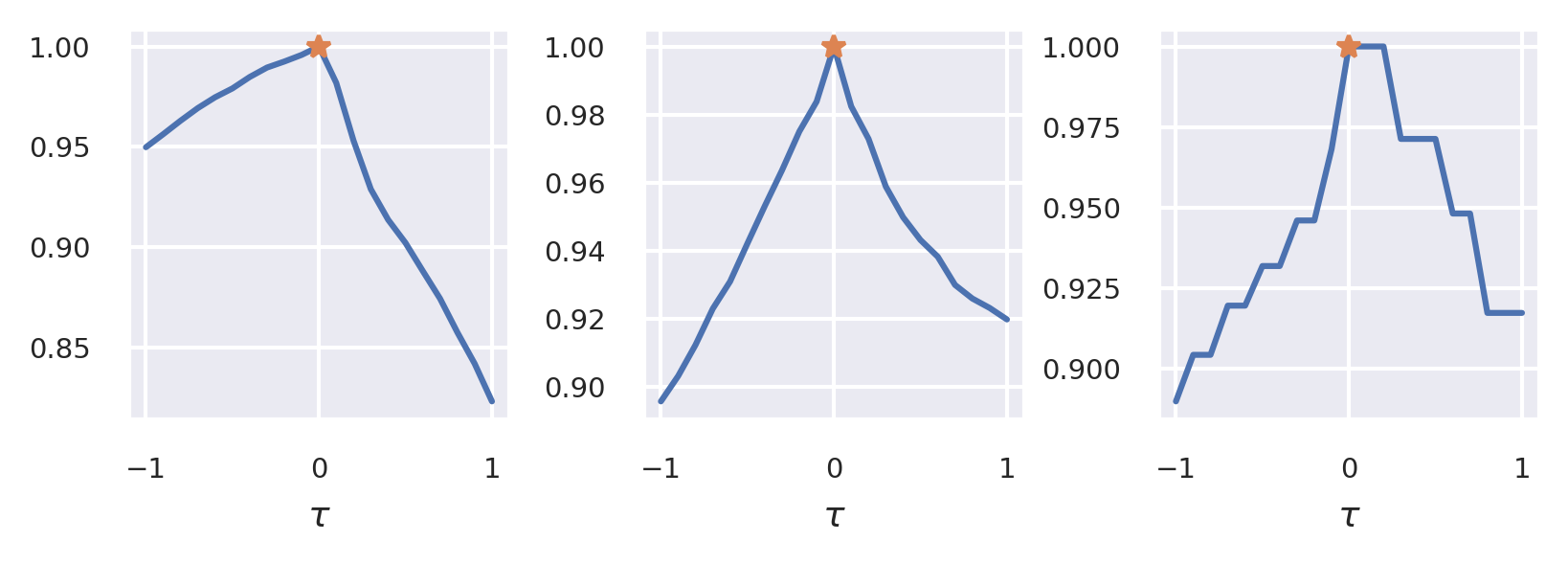}
\caption{Percentage of unchanged predictions versus perturbation parameter $\tau$. Red marker represents original dataset.}
\label{fig:pred_stab}
\end{figure}

\subsection{Empirical comparison to baseline methods}\label{ap:empcomp}

For the purpose of getting a complete understanding of our framework, in this section we are going to consider other explainability tools: SHAP, entropic variable projection \citep{Bachoc2022} and variable importance. Here we used \citep{treeshap} for the exact computation of SHAP values. 

The first is not only the most popular method, but it can provide a local interpretation. The second also relies on projecting datasets therefore it is conceptually close to ours. Finally, variable importance in tree-based algorithms is a widely used approach for explainability.

We trained a XGBoost on the {\it Adult} dataset. The results are reported for the test set with $n = 9769$ samples.

\begin{figure}
\centering
\includegraphics[width=\linewidth]{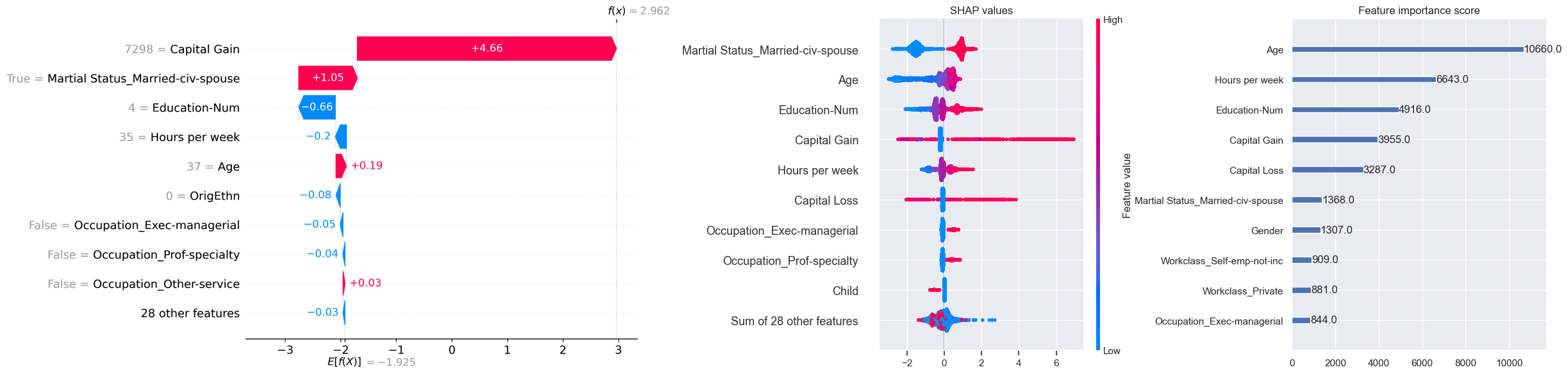}
\caption{\textbf{(Left)} SHAP values of a specific individual \textbf{ (Middle) } SHAP values averaged across the dataset in descent order. \textbf{  (Right) } Feature importance score with importance weight.}
\label{fig:shap_featimp}
\end{figure}

Figure~\ref{fig:shap_featimp} points to the difference between a global and a local approach to explainability. The figure on the left shows SHAP values computed for a single observation sorted in descent order. The feature \textit{Capital Gain} contributed the most. This is in agreement with our framework for the Naive Bayes classifier, as showed by Figure~\ref{fig:mult_clas}. On the other hand, when we use SHAP values to investigate the global behavior, the summary plot on the middle surprisingly shows \textit{Marital Status} as the most relevant feature. On the right we observe a bar plot with variable importance computed using \textit{weight} metric (the ”weight” corresponds to the number of times a feature appears in a tree). Again, we obtain a different conclusion from our approach: the most important feature is \textit{Age}. Although age can be a good proxy for having a high income, these experiments show how our approach produces a more reasonable explanation for an entire population.

Concerning the entropic projection, Figure~\ref{fig:compar_entr} shows respectively from left to right the effect of multiple mean-constraint perturbations and the isolated effect of perturbing the features \textit{Age} and \textit{Education-Num}. From the first plot, we see that the entropic projection also shows the feature \textit{Education-Num} as the ntial one and both frameworks show an increase in the proportion of predicted 1’s as mean \textit{Age} increases, but the Wasserstein projection shows a peak around middle age, which is a more realistic explanation. Analyzing the feature \textit{Hours per week}, we see that both projections show a strong positive trend, while the entropic projections amplify the effect of mean shifts and Wasserstein projections act as a more conservative perturbation mechanism.

These results show that mean-constrained distributional perturbations can substantially alter classifier behavior, and that the geometry of the projection (Wasserstein vs. entropic) plays a critical role in shaping both sensitivity and fairness-related outcomes.

\begin{figure}
\centering
\includegraphics[width=\linewidth]{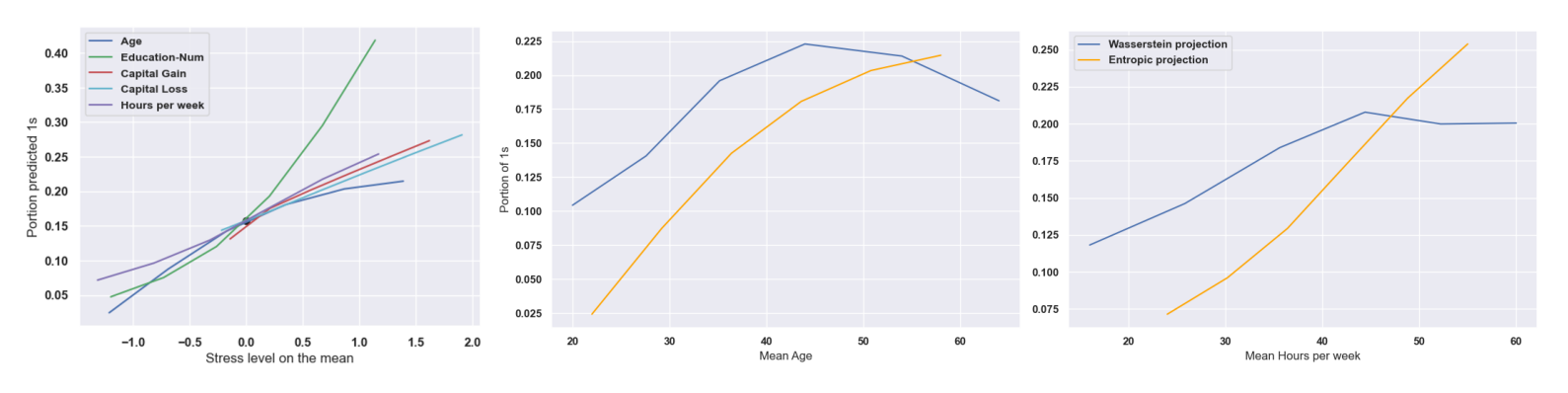}
\caption{\textbf{(Left)} Multiple mean stress obtained with entropic variable projection.\textbf{ (Middle) } Stressing the feature \textit{Age}. \textbf{  (Right) } Stressing the feature \textit{Hours per week}.}
\label{fig:compar_entr}
\end{figure}

\subsection{Computational complexity of the method}\label{ap:compcomp}

The proposed method is constituted by three main steps: training a model $f$ on a set $\{X_i\}_{i=1}^n$, projecting it to obtain $\{X_i^*\}_{i=1}^n$ and compute the explainability criteria. The first step inherits the computational complexity of the underlying algorithm. The projection step is composed by computing the Lagrange multiplier for the specific calibration parameter $t$ then solving \eqref{eq:tlamb}. In the linear case, we need to compute the average per feature investigated and solve a linear system $k\times k$. Then, it is essentially linear in the sample size $n$, $\mathcal{O}(nk)$. Once the multiplier is known, we need to evaluate $T_\lambda$, that is, we need to update $k$ coordinates resulting in a complexity $\mathcal{O}(nk)$. Denoting by $C$ the complexity of evaluating the model $f$, that has a complexity of $\mathcal{O}(nC)$. In the analysis of experiments in Section~\ref{sec:exp}, we followed that approach for $N_\tau = 21$ different values of $\tau$. Therefore, the overall cost of our approach is $\mathcal{O}(nk)$, which is linear in the sample size $n$ and in the number of stressed features regardless of feature dimension $d$. In the general case, the complexity remains linear in $n$ but now the cost of evaluating the gradient of $\Phi$ might depend on feature dimension $d$.

\section{Proofs and additional results}\label{SecondAppendix}

\subsection{Proof of Theorem~\ref{proj}}
    By Theorem~\ref{teo:peyp}, it is necessary and sufficient to find Lagrange multipliers $\lambda_1,\ldots,\lambda_k\in\R$ such that 
\begin{align*}
        &\sum_{i=1}^k\lambda_i\nabla g_i(P^*)\in \partial f(P^*)\quad\text{ (extremality condition)}\\ 
        & g(P^*) = 0 \quad\text{ (feasibility)}
\end{align*}
    for a point $P^*$ be optimal for problem \eqref{P}.
    The subgradient of $f$ is given, Proposition 7.17 in \citep{sant}, by the set of Kantorovich potentials between $P^*$ and $Q$:
    \begin{align*}
        \partial f(P^*) = \left\{ \phi\in C(\Omega)\mid \int\phi dP^*+ \int\phi^c dQ = \W_2^2(P^*,Q)\right\}.
    \end{align*}
    Each functional $g_i$ is linear in $P$, then $\nabla g_i(P) = g_i$. Since the dual of $\mathcal{M}(\Omega)$ is $C(\Omega)$, we can say $\nabla g_i(P)  = \Phi_i$. Now, notice that the definition of $T_\lambda$ implies
\begin{align*}
    &\W_2^2(Q,{T_\lambda}_\#Q)
    \leq\int\inf_x\left\{\|x-y\|^2-\lambda^\top\Phi(x)\right\}dQ(y)\\
    &+\int\lambda^\top\Phi(y)d{T_\lambda}_\#Q(y)\\
    &=\int(\lambda^\top\Phi)^c(y) dQ(y) + \int\lambda^\top\Phi(y)d{T_\lambda}_\#Q(y).
\end{align*}
Strong duality of the Kantorovich problem, see \citep{sant}, guarantees that this inequality is indeed an equality. 

Our calculations above prove the extremality condition for $P^*={T_{\lambda^\star}}_\#Q$. 
The feasibility condition imposes

\begin{align}
    t = \int \Phi(x)dP^*(x) = \int \Phi(T_{\lambda^\star}(y)) dQ(y).
\end{align}


\subsection{Generalizing Theorem~\ref{proj}}
\begin{prop}\label{prop:extproje}
    Consider the following minimization problem 
\begin{align}\label{P1}
    \min \W_2^2(P,Q_n) \text{ s.t. } \int_\Omega \Phi(x)dP(x) \geq t.
\end{align}
Then $Q_t$ is optimal for problem \eqref{P1} if, and only if, it is defined as the push-forward
\[ Q_t = {T_{\lambda^\star}}_{\#}Q_n \]
where 
$ T_\lambda(y) \in {\rm arg}\min_{x}  \left\{\|x-y\|^2-\lambda^\top \Phi(x) \right\}   $ and $\lambda^\star\in\R^k_{\geq0}$ solves 
\[\int_\Omega \Phi(T_{\lambda^\star}(x))dQ(x) \geq t\] and \[{\lambda^\star}^\top\left(t-\int_\Omega\Phi(T_{\lambda^\star}(x))dQ(x)\right) = 0.\]

\end{prop}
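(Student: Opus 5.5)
I will follow the proof of Theorem~\ref{proj} closely, replacing the equality constraint by an inequality and invoking the KKT form of the optimality conditions for convex programs with inequality constraints (the inequality analogue of Theorem~\ref{teo:peyp} in \citep{Peypouquet}). Write the constraint as $g(P) = t - \int_\Omega \Phi\, dP \le 0$, componentwise. Each $g_i$ is affine in $P$, and $f(P)=\W_2^2(P,Q_n)$ is convex and finite on $\mathcal{P}(\Omega)$. The KKT conditions then read as follows. There exists $\lambda^\star\in\R^k_{\geq0}$ such that $\sum_i\lambda^\star_i\Phi_i \in \partial f(P^*)$ (stationarity, since $\nabla g_i(P) = -\Phi_i$ under the duality $\mathcal{M}(\Omega)^*=C(\Omega)$). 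In addition, $\int\Phi\,dP^*\ge t$ (primal feasibility) and ${\lambda^\star}^\top(t-\int\Phi\,dP^*)=0$ (complementary slackness). These conditions are sufficient by convexity. They are necessary once a Slater-type qualification holds, for instance the existence of some $P$ with $\int\Phi\,dP > t$, which I would assume as in the equality case.

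The core step is the identification of the stationarity condition with the push-forward form. As in the proof of Theorem~\ref{proj}, $\partial f(P^*)$ is the set of Kantorovich potentials $\phi$ from $P^*$ to $Q_n$ (Proposition 7.17 in \citep{sant}). So I must show that $\lambda^\top\Phi$ is such a potential exactly when $P^* = {T_\lambda}_\#Q_n$.

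For the direction ``if $P^*={T_\lambda}_\#Q_n$'', I reuse the computation from Theorem~\ref{proj}. The coupling $(T_\lambda\times\mathrm{id})_\#Q_n$ has cost $\int \|T_\lambda(y)-y\|^2 dQ_n = \int(\lambda^\top\Phi)^c dQ_n + \int \lambda^\top\Phi\, dP^*$. By weak duality this upper bound on $\W_2^2$ is also a lower bound, hence it is an equality and $\lambda^\top\Phi$ is optimal.

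For the converse, I would argue as follows. If $\lambda^\top\Phi$ is a Kantorovich potential, then every optimal plan $\pi$ is supported on the set where $\lambda^\top\Phi(x) + (\lambda^\top\Phi)^c(y) = \|x-y\|^2$, that is, where $x\in\arg\min_x\{\|x-y\|^2-\lambda^\top\Phi(x)\}$. Under the uniqueness condition of the Remark (strong convexity of $h$), this forces $x=T_\lambda(y)$ $\pi$-a.e., so $P^*={T_\lambda}_\#Q_n$.

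Combining the two directions with feasibility and complementary slackness, rewritten via $\int\Phi\,dP^*=\int\Phi(T_{\lambda^\star}(x))\,dQ_n(x)$, gives exactly the two displayed conditions of the statement. The main obstacle I expect is the careful justification of the KKT theorem in the infinite-dimensional space $\mathcal{M}(\Omega)$ with a cone constraint. This requires a constraint qualification and care that $f$ is taken to be $+\infty$ off $\mathcal{P}(\Omega)$. I would handle it by adding the probability constraint to the domain and checking the Slater condition. The converse direction of the potential characterization also needs the uniqueness of the argmin, which I would state as a standing assumption from the Remark.
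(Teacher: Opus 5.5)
Your proposal is correct and follows essentially the paper's argument. It uses the Lagrange multiplier theorem (Theorem~\ref{teo:peyp}) for the inequality-constrained problem, identifies the stationarity condition $\sum_i\lambda^\star_i\Phi_i\in\partial f(P^*)$ with $P^*={T_{\lambda^\star}}_\#Q_n$ through Kantorovich potentials as in the proof of Theorem~\ref{proj}, and then adds feasibility and complementary slackness.

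Your two extra hypotheses are exactly what the paper glosses over, and both are needed for the ``only if'' direction. First, the paper's Slater point $\bar P=\alpha\delta_{x_0^i}$ is not a probability measure, so it lies outside $\mathrm{dom} f$; a strictly feasible $P\in\mathcal{P}(\Omega)$ is the right requirement. Second, the computation in Theorem~\ref{proj} only shows push-forward $\Rightarrow$ potential, so your support argument under uniqueness of the minimizer of $\|x-y\|^2-\lambda^\top\Phi(x)$ is what rules out optimal solutions that split mass. For example, with $Q_n=\delta_0$, $\Phi(x)=x^2$ on $[-1,1]$ and $t=1/4$, the measure $\tfrac12(\delta_{1/2}+\delta_{-1/2})$ is optimal but is not a push-forward of $\delta_0$.
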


\subsection{Proof of Proposition~\ref{prop:extproje}}
Let $g$ be the continuous function $g(P) = t - \int_\Omega \Phi(x)dP(x)$ and $f(P) = \W_2^2(P,Q)$. The set $\{P\in\mathcal{M}(\Omega)\mid \int_\Omega \Phi(x)dP(x) \geq t\} = g^{-1}([0,\infty))$ is closed for the weak convergence as $[0,\infty)$ is closed. Then the projection problem is well defined. 
Before applying the Lagrange multipliers theorem, we have to verify Slater's condition. By continuity of $\Phi_i$ and compacity of $\Omega$ we can consider, for $i = 1, \ldots, k$, $x_0^i\in \Omega$ such that $\Phi_i(x_0^i)=\min_{x\in \Omega}\Phi_i(x)$. Take $\alpha\in\R$ such that $\max_{1\leq i \leq k}t_i/\Phi_i(x_0^i)<\alpha.$ Then $\Bar{P} = \alpha\delta_{x_0^i}$ satisfies $g_i(\Bar{P})<0$ for $i = 1, \ldots, k$. The Lagrange multipliers theorem guarantees that $P^*$ is optimal for problem \eqref{P1} if, and only if, there exists  
$\lambda_1,\ldots,\lambda_k\geq0$ such that 
    \begin{align*}
        &\sum_{i=1}^k\lambda_i\nabla g_i(P^*)\in \partial f(P^*)\quad\text{ (extremality condition)}\\ 
        & g(P^*) \leq 0 \quad \text{ and } \lambda_ig_i(P^*)=0, i = 1, 2, \ldots, k \text{ (feasibility)}
    \end{align*}

The proof of the extremality condition is completely analogous to the proof of Theorem~\ref{proj}.
To conclude, we need to find $\lambda^\star\in\R^k_{\geq0}$ such that the feasibility condition is satisfied:
\begin{align}
    t \leq\int_\Omega \Phi(T_{\lambda^\star}(x))dQ(x) 
\end{align}
and
\begin{align}
    {\lambda^\star}^\top\left(t-\int_\Omega\Phi(T_{\lambda^\star}(x))dQ(x)\right) = 0.
\end{align}

\subsection{Proof of Theorem~\ref{teo:cons}}

The result follows by continuity of $P\mapsto\mathcal{W}_2^2(P,Q)$ with respect to weak convergence. Indeed, since the feasible set $D$ is closed for the weak convergence, the function $P\mapsto\inf\{\mathcal{W}_2^2(P,Q)\mid P\in D\}$ is also continuous. Therefore, if $Q_n\xrightarrow{w} Q$ we have
\[\inf_{P\in D}\mathcal{W}_2^2(P,Q_n)\to\inf_{ P\in D}\mathcal{W}_2^2(P,Q).\]
Now suppose that $P_n^*$ does not converge to a minimizer of $\W_2^2(P, Q)$ in $D$. Since $\Omega$ is compact and $D$ is closed, we can admit a convergent subsequence $P_{n_k}^*\xrightarrow{w}P^*$. Note that
\begin{align*}
    \mathcal{W}_2^2(Q, P^*) = \lim_k\mathcal{W}_2^2(Q_{n_k}, P_{n_k}^*) &=\lim_k \inf_{P\in D}\mathcal{W}_2^2(P,Q_{n_k})\\
    &=\inf_{P\in D}\mathcal{W}_2^2(P,Q),
\end{align*}
a contradiction.

\subsection{Proof of Theorem~\ref{dual}}
    Theorem~\ref{teo:rockdual} guarantees 
{\small
\begin{align*}
    \inf_{P\in D} \W_2^2(P,Q) = \max_{\lambda\in\R^k, \psi_i^*\in D_i^*}\left\{\sum_{i=1}^k\lambda_ig_i^*(\psi_i^*) - f^*\left(\sum_{i=1}^k\lambda_i\psi_i^*\right)\right\},
\end{align*}
}
where the set $D_i^* = \{\psi^*\in C(\Omega)\mid g_i^*(\psi^*)<\infty\}$.
It is a matter of rewriting equation \eqref{dual} using Lemma \eqref{conj_cons} and equation \eqref{conjuwas}:
\begin{align*}
    &\inf_{P\in D} \W_2^2(P,Q) = \max_{\lambda\in\R^k, \psi_i^*\in D_i^*}\sum_{i=1}^k\lambda_it_i - f^*\left(\sum_{i=1}^k\lambda_i\psi_i^*\right)\\
    &= \max_{\lambda\in\R^k}\left\{\lambda^\top t +\inf_{P\in \mathcal{P}(\Omega)}\left\{-\lambda^\top\int_\Omega \Phi dP+W_2^2(P,Q)\right\}\right\}\\
    &= \max_{\lambda\in\R^k}\left\{\inf_{P\in \mathcal{P}(\Omega)}\left\{\lambda^\top(t-\int_\Omega \Phi dP)+W_2^2(P,Q)\right\}\right\}.
\end{align*}

Completing the computation with Lemma \eqref{conju_was}:
\begin{align*}
&    \inf_{P\in D} W_2^2(P,Q) = \max_{\lambda\in\R^k, \psi_i^*\in D_i^*}\sum_{i=1}^k\lambda_it_i - f^*\left(\sum_{i=1}^k\lambda_i\Phi_i\right)\\
    &= \max_{\lambda\in\R^k}\left\{\lambda^\top t + \int_\Omega\left(\sum_{i=1}^k\lambda_i\Phi_i\right)^c dQ\right\}\\
    &= \max_{\lambda\in\R^k}\left\{\lambda^\top t + \int_\Omega\inf_{x\in\Omega}\{c(x,y)- \lambda^\top\Phi(x)\}
    dQ(y)\right\}.
    \end{align*}


\section{Convex optimization in normed spaces}\label{A}

\subsection{Lagrange multipliers}\label{A1}
Let $f, g_1, \ldots, g_m : X \to \mathbb{R} \cup \{+\infty\}$, denote proper, lower-semicontinuous, and convex functions along with continuous affine functions $h_1, \ldots, h_p : X \to \mathbb{R}$, where $X$ is a normed space.

We shall derive optimality conditions for the problem of minimizing $f$ over the set $C$ defined by
\begin{equation*}
C = \left\{ x \in X : g_i(x) \leq 0 \text{ for all } i, \text{ and } h_j(x) = 0 \text{ for all } j \right\}, 
\end{equation*}
assuming that this set is nonempty. 

Since each $g_i$ is convex and each $h_j$ is affine, the set $C$ is convex. To simplify the notation, write
\[
S = \operatorname*{argmin} \{ f(x) : x \in C \}.
\]
A qualification condition to derive our result is the following:

\textbf{Slater's Condition.}  
There exists $x_0 \in \operatorname{dom}(f)$ such that
\[
g_i(x_0) <, i = 1, \ldots, m,\text{ and } \quad h_j(x_0) = 0, j = 1, \ldots, p.
\]

\begin{teo}\label{teo:peyp}
If $\hat{x} \in S$ and Slater’s condition holds, then there exist $\hat{\lambda}_1, \ldots, \hat{\lambda}_m \geq 0$, and $\hat{\mu}_1, \ldots, \hat{\mu}_p \in \mathbb{R}$, such that $\hat{\lambda}_i g_i(\hat{x}) = 0$ for all $i = 1, \ldots, m$ and  
\begin{align*}
0 \in \partial \left( f + \sum_{i=1}^m \hat{\lambda}_i g_i \right)(\hat{x}) + \sum_{j=1}^p \hat{\mu}_j \nabla h_j(\hat{x}).\tag{$\star$}
\end{align*}

Conversely, if $\hat{x} \in C$ and there exist $\hat{\lambda}_1, \ldots, \hat{\lambda}_m \geq 0$, and $\hat{\mu}_1, \ldots, \hat{\mu}_p \in \mathbb{R}$, such that $\hat{\lambda}_i g_i(\hat{x}) = 0$ for all $i = 1, \ldots, m$ and ($\star$) holds, then $\hat{x}\in S$.    
\end{teo}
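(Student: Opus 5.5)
The statement is the Lagrange multiplier theorem for convex programs in normed spaces (Theorem~\ref{teo:peyp}). The plan is to split it into the sufficiency direction, which is elementary convex analysis, and the necessity direction, which needs a separation argument under Slater's condition.

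\textbf{Converse (sufficiency).} Assume $\hat{x}\in C$ and that ($\star$) holds. Then there is $\xi\in\partial(f+\sum_i\hat{\lambda}_i g_i)(\hat{x})$ with $\xi=-\sum_j\hat{\mu}_j\nabla h_j(\hat{x})$. Take any $x\in C$ and use the subgradient inequality:
\[
f(x)+\sum_i\hat{\lambda}_i g_i(x)\geq f(\hat{x})+\sum_i\hat{\lambda}_i g_i(\hat{x})-\sum_j\hat{\mu}_j\langle\nabla h_j(\hat{x}),x-\hat{x}\rangle .
\]
Each $h_j$ is affine, so $\langle\nabla h_j(\hat{x}),x-\hat{x}\rangle=h_j(x)-h_j(\hat{x})=0$. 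Complementary slackness removes the $g_i(\hat{x})$ terms. Since $\hat{\lambda}_i g_i(x)\leq 0$ on $C$, we get $f(x)\geq f(\hat{x})$, hence $\hat{x}\in S$.

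\textbf{Necessity.} The plan is a separation argument in $\R^{1+m+p}$. Define
\[
\mathcal{A}=\{(f(x)-f(\hat{x})+r_0,\;g(x)+r,\;h(x)) : x\in\operatorname{dom} f,\ r_0>0,\ r\in\R^m_{\geq0}\}.
\]
\begin{enumerate}
\item $\mathcal{A}$ is convex, because $f$ and the $g_i$ are convex and the $h_j$ are affine.
\item $\mathcal{A}$ does not contain the origin, by optimality of $\hat{x}$.
\item Finite-dimensional separation then gives a nonzero $(\lambda_0,\lambda,\mu)$ with $\lambda_0,\lambda\geq0$ and
\[
\lambda_0(f(x)-f(\hat{x}))+\lambda^\top g(x)+\mu^\top h(x)\geq0\quad\text{for all }x\in\operatorname{dom} f.
\]
\item Evaluating at $x=\hat{x}$ yields $\lambda^\top g(\hat{x})\geq0$. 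Combined with $g(\hat{x})\leq0$ and $\lambda\geq0$, this gives complementary slackness $\lambda_i g_i(\hat{x})=0$.
\item Slater's condition is used to show $\lambda_0>0$. If $\lambda_0=0$, evaluating at $x_0$ gives $\lambda^\top g(x_0)\geq0$ with $g(x_0)<0$, so $\lambda=0$. Then $\mu^\top h\geq0$ on $\operatorname{dom} f$; since $h$ is affine and vanishes at $x_0$, this forces $\mu=0$.
\item Normalize to $\lambda_0=1$. The inequality says $\hat{x}$ minimizes the convex function $F=f+\sum_i\lambda_ig_i+\sum_j\mu_jh_j$, so $0\in\partial F(\hat{x})$.
\item Write $\partial F(\hat{x})=\partial(f+\sum_i\lambda_ig_i)(\hat{x})+\sum_j\mu_j\nabla h_j(\hat{x})$. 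Adding a continuous affine function shifts the subdifferential by its gradient, so no qualification condition is needed for this splitting. This is ($\star$).
\end{enumerate}

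\textbf{Expected obstacle.} Step~5, the case $\lambda_0=0$ with $\mu\neq0$. The bound $\mu^\top h\geq0$ on $\operatorname{dom} f$ only forces $\mu^\top h\equiv0$ on the affine hull of $\operatorname{dom} f$, not $\mu=0$. I would handle this as follows.
\begin{itemize}
\item If the $h_j$ are linearly independent modulo constants on that hull, then $\mu=0$ follows directly.
\item Otherwise, I would discard redundant equality constraints beforehand, or assume $x_0$ lies in a suitable relative interior; this is presumably implicit in the cited source \citep{Peypouquet}.
\end{itemize}
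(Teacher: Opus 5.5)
Your sufficiency argument is correct. Steps~1--4 and 6--7 of the necessity part are also correct, with one minor correction: $x$ in $\mathcal{A}$ must range over $D:=\operatorname{dom} f\cap\bigcap_i\operatorname{dom} g_i$, since the $g_i$ may take the value $+\infty$. The paper gives no proof of this theorem; it imports it from Peypouquet's book. So the only question is your Step~5, and that step cannot be completed. With Slater's condition as stated (merely $x_0\in\operatorname{dom} f$), the necessity claim is false once equality constraints are present.

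Here is a counterexample. Take $X=\R^2$, no $g_i$ (or $g_1\equiv-1$), $h_1(x)=x_2$, and $f(x)=x_1+\delta_B(x)$, where $B$ is the closed unit disk centred at $(0,1)$. Then $C\cap\operatorname{dom} f=\{0\}$, so $\hat x=0\in S$, and Slater holds with $x_0=0$. But $\partial f(0)=\{(1,-s): s\ge 0\}$, so $0\notin\partial f(0)+\mu e_2$ for every $\mu\in\R$. Your separation returns exactly the degenerate vector $(\lambda_0,\mu)=(0,1)$, because $x_2\ge0$ on $B$.

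The same example shows your diagnosis of the obstacle is off. The bound $\mu^\top h\ge0$ on $\operatorname{dom} f$, with equality at $x_0$, does not even force $\mu^\top h\equiv0$ on the affine hull unless $x_0$ is a relative interior point. Your first remedy fails: $h_1=x_2$ is nonconstant on $\operatorname{aff}B=\R^2$, yet $\mu=1$ separates. Discarding redundant constraints does not help either, since there is only one constraint.

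What is missing is an explicit extra hypothesis, not something implicit in the source. The cleanest one for your argument lives in the image space: assume $0\in\operatorname{int}h(D)\subset\R^p$. Then $\lambda_0=0$ gives $\lambda=0$ via $x_0$. Next, $\mu^\top y\ge 0$ on a neighbourhood of $y=0$ gives $\mu=0$, which contradicts the separator being nonzero. Alternatively, you can assume $x_0\in\operatorname{int}D$ plus linear independence of the $\nabla h_j$. Independence can be arranged by dropping redundant constraints, which is legitimate since $C\neq\emptyset$.

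In the paper's own application, $\operatorname{dom} f=\mathcal{P}(\Omega)$ has empty interior in $\mathcal{M}(\Omega)$. So only the image-space condition is usable there, and it reads $t\in\operatorname{int}\operatorname{conv}\Phi(\Omega)$. The example also contradicts the remark following the theorem: linearity of the constraints alone does not guarantee a multiplier. Here the dual value $0$ is approached as $\mu\to\infty$ but never attained.
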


Note that our constraint in problem \eqref{P} is linear, so strong duality holds without the need to verify Slater's condition. Check chapter 3 in \citep{Peypouquet} for more details.

\subsection{Fenchel conjugate of Wasserstein distance}\label{A3}
Before stating the dual problem of \eqref{P}, we will see how to calculate the conjugate function of the objective and the constraint in problem \eqref{P}.

\begin{lema} \label{conj_cons}
Consider $\Phi_i:\Omega\to\R$ and $t_i\in\R$. The Fenchel conjugate of 
\[g_i:\mathcal{M}(\Omega)\to\R, g_i(P)\coloneqq\int_\Omega \Phi_i(x)dP(x)-t_i\] is 
\[g_i^*:C(\Omega)\to\R, g_i^*(\psi^*)\coloneqq\begin{cases}
        t_i, \text{ if } \psi^* = \Phi_i\\
        +\infty, \text{ otherwise}.
        \end{cases} \]
\end{lema}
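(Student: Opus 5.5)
The plan is to compute the Fenchel conjugate directly from its definition, using the duality pairing between $\mathcal{M}(\Omega)$ and $C(\Omega)$ given by $\langle \psi^*, P\rangle = \int_\Omega \psi^* dP$. By definition,
\begin{align*}
g_i^*(\psi^*) = \sup_{P\in\mathcal{M}(\Omega)} \left\{ \int_\Omega \psi^* dP - \int_\Omega \Phi_i dP + t_i \right\} = t_i + \sup_{P\in\mathcal{M}(\Omega)} \int_\Omega (\psi^*-\Phi_i)\, dP .
\end{align*}
So everything reduces to evaluating the supremum of a linear functional over the whole vector space $\mathcal{M}(\Omega)$ of finite signed measures.

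There are two cases. If $\psi^*=\Phi_i$, the integrand vanishes identically, the supremum is $0$, and $g_i^*(\Phi_i)=t_i$.

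If $\psi^*\neq\Phi_i$, then, since both are continuous, there is a point $x_0\in\Omega$ with $c:=\psi^*(x_0)-\Phi_i(x_0)\neq 0$. Test the supremum with the measures $P_s = s\,\mathrm{sign}(c)\,\delta_{x_0}$ for $s>0$. These give $\int(\psi^*-\Phi_i)dP_s = s|c|\to+\infty$, so $g_i^*(\psi^*)=+\infty$.

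The only point requiring care is this second case. It uses that the conjugate is taken over the full linear space $\mathcal{M}(\Omega)$, which includes signed measures and arbitrary positive multiples, not just over $\mathcal{P}(\Omega)$. This matches the paper's setup, where $f$ and $g$ are defined on $\mathcal{M}(\Omega)$. Over $\mathcal{P}(\Omega)$ alone, the supremum would instead be $\max_\Omega(\psi^*-\Phi_i)$, which is finite. Dirac masses lie in $\mathcal{M}(\Omega)$, so the scaling argument goes through. Continuity of $\psi^*-\Phi_i$ is what makes "$\neq$" pointwise witnessed, and pointwise evaluation of continuous functions is exactly what the Dirac test measures detect.
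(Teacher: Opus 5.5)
Your proof is correct and follows the same route as the paper: expand the definition, reduce to $t_i+\sup_{P}\int(\psi^*-\Phi_i)\,dP$, and make the supremum blow up with scaled Dirac masses. Your choice $s\,\mathrm{sign}(c)\,\delta_{x_0}$ is in fact more careful than the paper's $n\delta_{\bar x}$. The paper tacitly assumes that $\psi^*-\Phi_i$ takes a positive value somewhere, so it does not cover the case $\psi^*\le\Phi_i$ with $\psi^*\neq\Phi_i$, which is exactly where signed measures in $\mathcal{M}(\Omega)$ are needed.
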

\begin{proof}
    By definition of Fenchel conjugate, 
    \begin{align*}
        g_i^*(\psi^*)&=\sup_{P\in \mathcal{M}(\Omega)}\left\{\langle \psi^*,P\rangle-g(P)\right\}\\
        &=\sup_{P\in \mathcal{M}(\Omega)}\left\{\int_\Omega \psi^*(x)dP(x)-\int_\Omega \Phi_i(x)dP(x)+t_i\right\}\\
        &=\sup_{P\in \mathcal{M}(\Omega)}\left\{\int_\Omega \psi^*(x)-\Phi_i(x)dP(x)\right\}+t_i.
    \end{align*}
    It is easy to see that the supremum is zero if $ \psi^* = \Phi_i$. In the other case, we can choose $P_n = n\delta_{\bar{x}}$ where $\bar{x}$ is such that $\psi^*(\bar{x})-\Phi_i(\bar{x})>0$ and make $n\to\infty$ to see that the supremum is $\infty$.
\end{proof}

\begin{lema}\label{conju_was}
    Fix $Q\in\mathcal{P}(\Omega)$. The Fenchel conjugate of 
\[f:\mathcal{M}(\Omega)\to\R, f(P)\coloneqq\begin{cases}
         \W_2^2(P,Q), \text{ if } P\in\mathcal{P}(\Omega)\\
        +\infty, \text{ otherwise}.
        \end{cases}\] is 
\[f^*: C(\Omega)\to\R, f^*(\psi^*)\coloneqq - \int_\Omega (\psi^*)^cdQ. \]
\end{lema}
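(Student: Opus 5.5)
The plan is to compute the conjugate directly from its definition and reduce it to Kantorovich duality. The pairing between $\mathcal{M}(\Omega)$ and $C(\Omega)$ is $\langle\psi^*,P\rangle=\int\psi^*dP$, and $f=+\infty$ off $\mathcal{P}(\Omega)$. Hence
\begin{align*}
f^*(\psi^*) = \sup_{P\in\mathcal{P}(\Omega)}\left\{\int_\Omega\psi^*dP - \W_2^2(P,Q)\right\} = -\inf_{P\in\mathcal{P}(\Omega)}\left\{\W_2^2(P,Q)-\int_\Omega\psi^*dP\right\}.
\end{align*}
It therefore suffices to show that
\begin{align*}
\inf_{P\in\mathcal{P}(\Omega)}\left\{\W_2^2(P,Q)-\int_\Omega\psi^* dP\right\} = \int_\Omega(\psi^*)^c dQ,
\end{align*}
where $(\psi^*)^c(y)=\inf_{x\in\Omega}\{\|x-y\|^2-\psi^*(x)\}$. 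This is the same $c$-transform used in the proof of Theorem~\ref{proj}.

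For the lower bound ($\geq$), fix $P$ and let $\pi\in\Pi(P,Q)$ be an optimal coupling. For every $x,y$ we have $\|x-y\|^2\geq\psi^*(x)+(\psi^*)^c(y)$ by definition of the $c$-transform. Integrating against $\pi$ gives $\W_2^2(P,Q)\geq\int\psi^*dP+\int(\psi^*)^cdQ$. This is just the easy direction of Kantorovich duality.

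For the upper bound ($\leq$), I would construct a near-optimal $P$ explicitly. Since $\Omega$ is compact and $\psi^*$ is continuous, the map $x\mapsto\|x-y\|^2-\psi^*(x)$ attains its infimum for each $y$. By a measurable selection theorem (e.g.\ Kuratowski--Ryll-Nardzewski, applied to the closed-valued, upper hemicontinuous argmin correspondence), there is a Borel map $T(y)$ in that argmin. Set $P=T_\#Q$ and use the coupling $(T,\mathrm{id})_\#Q$. Then
\begin{align*}
\W_2^2(P,Q)-\int\psi^*dP \leq \int\left(\|T(y)-y\|^2-\psi^*(T(y))\right)dQ(y)=\int(\psi^*)^cdQ.
\end{align*}
This is exactly the computation in the proof of Theorem~\ref{proj} with $\lambda^\top\Phi$ replaced by $\psi^*$. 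Combining the two inequalities shows the infimum is attained and gives $f^*(\psi^*)=-\int(\psi^*)^cdQ$.

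The main obstacle is the measurability of the selection $T$. If one wants to avoid measurable selection, the fallback is an approximation argument. Since $(\psi^*)^c$ is continuous (it is an infimum of a uniformly equicontinuous family on compact $\Omega$), partition $\Omega$ into finitely many small Borel cells. On each cell pick a single $x$ that is $\varepsilon$-optimal for all $y$ in the cell, which uniform continuity allows. This yields a simple map $T_\varepsilon$ with value at most $\int(\psi^*)^cdQ+\varepsilon$, and letting $\varepsilon\to0$ closes the upper bound. The remaining points are routine: well-definedness of $f^*$ as a real-valued function on $C(\Omega)$ follows from boundedness of $\psi^*$ and of $\|x-y\|^2$ on compact $\Omega$.
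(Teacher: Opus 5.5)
Your proof is correct, and it follows a different and more complete route than the paper's. The paper uses strong Kantorovich duality to write $\W_2^2(P,Q)=\sup_{\varphi}\{\int\varphi\,dP+\int\varphi^c\,dQ\}$, so that
$f^*(\psi^*)=\sup_P\inf_\varphi\{\int\psi^*dP-\int\varphi\,dP-\int\varphi^c\,dQ\}$.
It then substitutes $\varphi=\psi^*$ and asserts that the inner infimum equals $-\int(\psi^*)^c\,dQ$ for every $P$. Evaluating an infimum at one particular $\varphi$ only bounds it from above, so this step rigorously gives only $f^*(\psi^*)\le-\int(\psi^*)^c\,dQ$. That is exactly your easy direction, and as you note it needs only the pointwise inequality $\|x-y\|^2\ge\psi^*(x)+(\psi^*)^c(y)$, not strong duality. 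For a generic $P$ the inner infimum is $\int\psi^*dP-\W_2^2(P,Q)$. This is strictly smaller unless $\psi^*$ is a Kantorovich potential for $(P,Q)$; take $\psi^*\equiv0$ and any $P\neq Q$. The reverse inequality requires exhibiting a $P$ for which $\psi^*$ is optimal. Your construction $P=T_\#Q$ with $T(y)\in\argmin_{x\in\Omega}\{\|x-y\|^2-\psi^*(x)\}$ does exactly that, mirroring the computation in the paper's proof of Theorem~\ref{proj}. Your approach costs a measurable-selection step. Your fallback by simple $\varepsilon$-optimal maps is also valid, since $y\mapsto\|x-y\|^2$ is $2\,\mathrm{diam}(\Omega)$-Lipschitz uniformly in $x$. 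In exchange you actually prove the equality, show that the supremum defining $f^*$ is attained, and identify the maximizer. The paper's argument is shorter but, as written, only establishes one inequality.
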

\begin{proof}
    By definition, 
    \begin{align}
        f^*(\psi^*)&=\sup_{P\in \mathcal{M}(\Omega)}\langle \psi^*,P\rangle-f(P) = \sup_{P\in\mathcal{P}(\Omega)}\langle \psi^*,P\rangle-f(P) \nonumber \\
        &=\sup_{P\in \mathcal{P}(\Omega)}\int_\Omega \psi^*dP-\W_2^2(P,Q) \label{conjuwas}\\
        &=\sup_{P\in \mathcal{P}(\Omega)}\left\{\int_\Omega \psi^*dP-\sup_{\varphi\in C(\Omega)} \int_\Omega\varphi dP + \int_\Omega\varphi^c dQ\right\} \nonumber\\
        &=\sup_{\mathcal{P}(\Omega)}\left\{\inf_{\varphi\in C(\Omega)} -\int_\Omega\varphi dP - \int_\Omega\varphi^c dQ+\int_\Omega \psi^*dP\right\}. \nonumber
    \end{align}
    We can choose $\varphi=\psi^*$ and get the inner infimum equals to $- \int_\Omega{\psi^*}^c dQ$:
    \begin{align*}
        f^*(\psi^*)
        &=\sup_{\mathcal{P}(\Omega)}- \int_\Omega{\psi^*}^c dQ\\
        &= - \int_\Omega{\psi^*}^c dQ.
    \end{align*}
\end{proof}

\subsection{Fenchel duality theorem}\label{A2}

 Since the constraint in problem \eqref{P} is linear, we are going to use a less general dual formulation based on the following result of \citep{rockafellar}: 
\begin{teo}\label{teo:rockdual}
If there exist Lagrange multipliers in the sense of Theorem~\ref{teo:peyp}, then
\begin{align*}
    \inf_{x\in C} f(x) = \max_{\lambda\in\R^k, \psi_i^*\in D_i^*}\left\{\sum_{i=1}^k\lambda_ig_i^*(\psi_i^*) - f^*\left(\sum_{i=1}^k\lambda_i\psi_i^*\right)\right\},
\end{align*}
where \[D_i^* = \{\psi^*\in C(\Omega)\mid g_i^*(\psi^*)<\infty\}.\]
\end{teo}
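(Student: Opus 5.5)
The plan is to prove the identity by combining \emph{weak duality}, which follows from the Fenchel--Young inequality, with \emph{attainment}, which follows from the multipliers assumed to exist. I treat the case relevant to problem \eqref{P}: the constraints are affine, $g_i(P)=\langle \Phi_i,P\rangle - t_i$ with equality constraints, and the pairing is $\langle \psi,P\rangle=\int_\Omega\psi\,dP$ between $C(\Omega)$ and $\mathcal{M}(\Omega)$. I will use the subdifferential, and the conjugates $f^*$, $g_i^*$, with respect to this same pairing throughout.

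\textbf{Weak duality.} Fix $\lambda\in\R^k$ and $\psi_i^*\in D_i^*$. By Lemma~\ref{conj_cons}, $g_i^*(\psi_i^*)<\infty$ forces $\psi_i^*=\Phi_i$ and $g_i^*(\psi_i^*)=t_i$. For any feasible $x\in C$ we have $g_i(x)=0$, hence $\langle \Phi_i,x\rangle=t_i$ and
\[
\Big\langle \sum_{i=1}^k\lambda_i\psi_i^*,x\Big\rangle=\sum_{i=1}^k\lambda_i g_i^*(\psi_i^*).
\]
The Fenchel--Young inequality $f(x)+f^*(\psi)\ge\langle\psi,x\rangle$, applied with $\psi=\sum_i\lambda_i\psi_i^*$, then gives
\[
f(x)\ \ge\ \sum_{i=1}^k\lambda_i g_i^*(\psi_i^*)-f^*\Big(\sum_{i=1}^k\lambda_i\psi_i^*\Big).
\]
Taking the infimum over $x\in C$ and the supremum over $(\lambda,\psi^*)$ shows that $\inf_C f$ is at least the supremum on the right-hand side.

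\textbf{Attainment.} By hypothesis there are $\hat x\in S$ and multipliers $\hat\mu\in\R^k$ with $0\in\partial f(\hat x)+\sum_j\hat\mu_j\nabla h_j(\hat x)$, where $\nabla h_j=\Phi_j$. Set $\hat\lambda=-\hat\mu$ and $\psi_i^*=\Phi_i$, so that $\hat\psi:=\sum_i\hat\lambda_i\Phi_i\in\partial f(\hat x)$. Equality in Fenchel--Young holds exactly on the subdifferential, so
\[
f(\hat x)+f^*(\hat\psi)=\langle\hat\psi,\hat x\rangle=\sum_{i}\hat\lambda_i t_i ,
\]
where the last equality uses feasibility of $\hat x$. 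Hence the dual objective at $(\hat\lambda,\Phi)$ equals $f(\hat x)=\inf_C f$. Combined with weak duality, the supremum is attained and equals the primal infimum, which is the claimed identity with a maximum.

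\textbf{Main obstacle.} The delicate step is justifying Fenchel--Young and its equality case in the non-normed-dual pairing $(\mathcal{M}(\Omega),C(\Omega))$. I would handle it by working in the weak (narrow) topology, in which $C(\Omega)$ is exactly the topological dual of $\mathcal{M}(\Omega)$. In that setting the inequality is immediate from the definition of $f^*$, and the equality case is the definition of $\psi\in\partial f(x)$ rewritten: $f(y)\ge f(x)+\langle\psi,y-x\rangle$ for all $y$ means $\langle\psi,y\rangle-f(y)\le\langle\psi,x\rangle-f(x)$, i.e.\ $f^*(\psi)=\langle\psi,x\rangle-f(x)$. A secondary point is the sign convention when inequality constraints (with $\lambda_i\ge0$) are included. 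There one uses complementary slackness $\hat\lambda_i g_i(\hat x)=0$ to recover $\langle\Phi_i,\hat x\rangle=t_i$ on the active constraints, and the constraint sign to keep the inequality in the right direction in the weak-duality step.
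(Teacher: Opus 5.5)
Your proof is correct, but it takes a different route from the paper, which gives no proof of this statement at all. The paper imports the result from Rockafellar's conjugate duality theory and then uses it as a black box in the proof of Theorem~\ref{dual}.

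You instead prove directly the case actually needed there, namely affine equality constraints $\langle\Phi_i,P\rangle=t_i$. Weak duality follows from the Fenchel--Young inequality plus feasibility and needs no hypothesis. The multiplier hypothesis is used only to exhibit the dual point $\hat\lambda=-\hat\mu$, $\psi_i^*=\Phi_i$, at which the equality case of Fenchel--Young closes the gap. Incidentally, this shows you only need some $\hat x\in C$ satisfying the multiplier condition, not $\hat x\in S$ a priori.

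What your route buys is transparency and self-containment. Through Lemma~\ref{conj_cons}, it makes explicit that $D_i^*=\{\Phi_i\}$, so the maximization over $\psi_i^*$ is vacuous, which is exactly the form the paper then exploits. It also avoids checking that a theorem formulated in finite dimensions or normed spaces applies to the pair $(\mathcal{M}(\Omega),C(\Omega))$, where $C(\Omega)$ is not the norm dual.

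Your ``main obstacle'' is in fact not one. Fenchel--Young and its equality case follow purely algebraically once $f^*$ and $\partial f$ are defined relative to the same bilinear pairing, so no topology enters. Moreover, $\hat\psi=\sum_i\hat\lambda_i\Phi_i$ already lies in $C(\Omega)$, so the subgradient inequality means the same thing whichever dual space Theorem~\ref{teo:peyp} is read in.

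What the citation nominally buys is generality. However, the displayed identity, with $\lambda$ ranging over all of $\R^k$, is only valid for equality constraints. Take a single affine inequality constraint $\langle\Phi,P\rangle\le t$ with $Q$ strictly feasible: the primal value is $0$, while the free-sign dual objective $\lambda t+\int(\lambda\Phi)^c\,dQ$ is strictly positive for small $\lambda>0$. So your restriction is precisely the regime in which the statement as written holds, and your closing remark that inequality constraints need sign-restricted multipliers is the right amendment.
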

\section{Examples}\label{ThirdAppendix}

\subsection{Norm}
Suppose $\Phi(x) = \|x\|^2.$ In this case, $k=1$ and $\Phi$ is strongly convex. If $\lambda<0$, then $h(x) = \|x-y\|^2 - \lambda\Phi(x)$ is strongly convex and we have existence and uniqueness of 
\[\inf_x\left\{ \|x-y\|^2 - \sum_{i=1}^k\lambda_i\Phi_i(x)\right\}\] immediately guaranteed by finding a critical point. Take derivative in $x:$
\begin{align}
    \nabla h(x) = 2(x-y)-2\lambda x = 2x(1-\lambda) - 2y.
\end{align}
The optimal $T_\lambda(y)$ is $T_\lambda(y) = y/(1-\lambda).$

\begin{cor}(Norm)
    Suppose $\Omega$ is compact, $d^2(x,y) = \|x-y\|^2$, $Q = \frac{1}{n}\sum_{i=1}^n\delta_{Z_i}$ and $\Phi(x) = \|x\|^2$. Then $P^*$ is an optimal solution to problem \eqref{P}  if, and only if, it is defined as
    $P^*= \frac{1}{n}\sum_{i=1}^n\delta_{Z_i/(1-\lambda)}$ where $t = \frac{1}{n}\sum_{i=1}^n \Phi( Z_i/(1-\lambda))$.
\end{cor}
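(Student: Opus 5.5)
The plan is to specialize Theorem~\ref{proj} to $\Phi(x)=\|x\|^2$ (so $k=1$) and an empirical $Q$, in the same way Corollary~\ref{col:lin} specializes it to the linear case. Theorem~\ref{proj} says that $P^*$ is optimal for \eqref{P} if and only if $P^*={T_{\lambda^*}}_\#Q$ with $t=\int\Phi(T_{\lambda^*}(y))\,dQ(y)$. Everything therefore reduces to two tasks: computing $T_\lambda$ explicitly, and evaluating the push-forward and the feasibility integral when $Q=\frac1n\sum_i\delta_{Z_i}$.

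First I compute $T_\lambda$. Fix $y$ and set $h(x)=\|x-y\|^2-\lambda\|x\|^2=(1-\lambda)\|x\|^2-2x^\top y+\|y\|^2$.
\begin{itemize}
\item Its Hessian is $2(1-\lambda)I$. So for $\lambda<1$, and in particular in the regime $\lambda<0$ singled out above, $h$ is strongly convex and has a unique minimizer.
\item That minimizer is the unique critical point: $\nabla h(x)=2(1-\lambda)x-2y=0$ gives $T_\lambda(y)=y/(1-\lambda)$. The argmin is then a singleton, so $T_\lambda$ is well defined, as required in the Remark after Theorem~\ref{proj}.
\end{itemize}

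Next I evaluate the push-forward. For an empirical measure, ${T_\lambda}_\#Q=\frac1n\sum_i\delta_{T_\lambda(Z_i)}=\frac1n\sum_i\delta_{Z_i/(1-\lambda)}$, because $T_\lambda^{-1}(A)$ contains $Z_i$ exactly when $T_\lambda(Z_i)\in A$. The feasibility condition of Theorem~\ref{proj} then becomes $t=\frac1n\sum_i\Phi(Z_i/(1-\lambda))$. Both directions of the equivalence transfer verbatim from Theorem~\ref{proj}.

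The main obstacle is making sure Theorem~\ref{proj} actually applies. There are two points to check.
\begin{itemize}
\item \emph{Where the minimization takes place.} In the dual formula of Theorem~\ref{dual} the infimum runs over $x\in\Omega$, not over all of $\R^d$. So $y/(1-\lambda)$ must lie in $\Omega$, or we must take $\Omega$ large enough, for example a closed ball star-shaped around $0$ that contains the rescaled points. When $\lambda<0$ the factor $1/(1-\lambda)$ is less than $1$, so if $\Omega$ is star-shaped about the origin the point stays in $\Omega$ automatically. I would state this assumption explicitly.
\item \emph{Existence of a multiplier.} We need some $\lambda$ solving $t=\frac{1}{(1-\lambda)^2}\cdot\frac1n\sum_i\|Z_i\|^2$. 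Writing $s=\frac1n\sum_i\|Z_i\|^2$, this gives $\lambda=1-\sqrt{s/t}$ whenever $t>0$ and $s>0$. This $\lambda$ satisfies $\lambda<1$, and $\lambda\le0$ exactly when $t\le s$. I would note that the solution exists whenever $t$ is positive, and that for $t>s$ (so $0<\lambda<1$) the function $h$ is still strongly convex, so the same formula for $T_\lambda$ holds.
\end{itemize}
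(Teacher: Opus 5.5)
Your proposal is correct and follows the paper's route. The paper finds the critical point of the strongly convex $h(x)=\|x-y\|^2-\lambda\|x\|^2$, obtains $T_\lambda(y)=y/(1-\lambda)$, and plugs this into Theorem~\ref{proj} with the empirical $Q$, just as you do. Your additional checks go beyond what the paper verifies: you note that $\lambda<1$ (not only $\lambda<0$) already gives strong convexity; you give the explicit multiplier $\lambda=1-\sqrt{s/t}$, which also rules out the spurious root $1+\sqrt{s/t}$; and you require that the points $Z_i/(1-\lambda)$ lie in $\Omega$, so that the minimizer over $\R^d$ coincides with the one over $\Omega$ (star-shapedness of $\Omega$ guarantees this only when $t\le s$).
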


\subsection{Quadratic}
Suppose $k\leq d$ and $\Phi(x) = (x_1^2, x_2^2,\ldots, x_k^2).$ In this case, $\Phi$ is strongly convex. If $\lambda_i<0$ for all $i=1,\ldots, k$ then $h(x) = \|x-y\|^2 -\sum_{i=1}^k\lambda_i\Phi_i(x)$ is strongly convex and we have existence and uniqueness of
\[\inf_x\left\{ \|x-y\|^2 - \sum_{i=1}^k\lambda_i\Phi_i(x)\right\}\] immediately guaranteed by finding a critical point. Let $h(x) = \|x-y\|^2 - \sum_{i=1}^k\lambda_ix_i^2$ and take derivative in $x:$
\begin{align}
    \nabla h(x) = 2(x-y)-D\Phi(x)\lambda,
\end{align}
where $D\Phi(x)$ is the $d\times k$ matrix given by
 \begin{align}
     D\Phi(x) =  \begin{bmatrix}
2x_1 & 0 & \ldots & 0 \\
0 & 2x_2 & 0 & \vdots\\
\vdots & 0 & \ddots  & 0 \\
0 & \ldots & 0 & 2x_{k}
\end{bmatrix}.
 \end{align}

The optimal $T_\lambda(y)$ is $T_\lambda(y) = y\oslash(1-\lambda)$ where $\oslash$ means element-wise division of vectors. If $k<d$, then complete $\lambda$ with zero entries.

\begin{cor}(Quadratic case)
    Suppose $\Omega$ is compact, $d^2(x,y) = \|x-y\|^2$, $Q = \frac{1}{n}\sum_{i=1}^n\delta_{Z_i}$ and $\Phi(x) = (x_1^2, \ldots, x_k^2)$. Then $P^*$ is an optimal solution to problem \eqref{P}  if, and only if, it is defined as
    $P^*= \frac{1}{n}\sum_{i=1}^n\delta_{Z_i\oslash(1-\lambda)}$ where $t = \frac{1}{n}\sum_{i=1}^n \Phi( Z_i\oslash(1-\lambda))$.
\end{cor}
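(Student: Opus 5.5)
The statement is the Quadratic corollary: $Q$ is empirical, $\Phi(x)=(x_1^2,\ldots,x_k^2)$, and the claim is that the optimal $P^*$ is the push-forward of $Q$ under coordinatewise scaling. I would deduce it from Theorem~\ref{proj}, exactly as Corollary~\ref{col:lin} follows in the linear case. Theorem~\ref{proj} says that $P^*$ is optimal for \eqref{P} if and only if $P^*={T_{\lambda^*}}_\#Q$, with $T_\lambda(y)\in\arg\min_x\{\|x-y\|^2-\lambda^\top\Phi(x)\}$ and $\lambda^*$ chosen so that $t=\int\Phi(T_{\lambda^*}(y))\,dQ(y)$. So the proof reduces to three things: computing $T_\lambda$ explicitly, checking that it is well defined and single-valued, and specializing the push-forward and the constraint equation to $Q=\frac1n\sum_i\delta_{Z_i}$.

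\textbf{Computing $T_\lambda$.} Fix $y$ and extend $\lambda$ by zeros to $\R^d$. The objective separates across coordinates:
\[ h(x)=\sum_{i=1}^d\big[(x_i-y_i)^2-\lambda_i x_i^2\big]=\sum_{i=1}^d\big[(1-\lambda_i)x_i^2-2x_iy_i+y_i^2\big]. \]
Each summand is strongly convex in $x_i$ whenever $\lambda_i<1$. This covers the regime $\lambda_i<0$ discussed before the corollary, and more generally any $\lambda_i<1$. Setting the gradient $2(1-\lambda_i)x_i-2y_i$ to zero gives the unique minimizer $x_i=y_i/(1-\lambda_i)$. Hence $T_\lambda(y)=y\oslash(1-\lambda)$, which is single-valued, so the remark after Theorem~\ref{proj} gives existence and uniqueness of the projection. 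The push-forward of an empirical measure is $\frac1n\sum_i\delta_{T_\lambda(Z_i)}$, and the feasibility condition of Theorem~\ref{proj} becomes $t=\frac1n\sum_i\Phi(Z_i\oslash(1-\lambda))$. Together these give the stated form of $P^*$.

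\textbf{Main obstacle.} The delicate step is reconciling this computation with the setting of Theorem~\ref{proj}, where the minimization defining $T_\lambda$ is over $\Omega$, a compact set, not all of $\R^d$. I would handle it in two steps.
\begin{enumerate}
\item First, assume $\Omega$ is large enough, for instance a box containing the scaled points $Z_i\oslash(1-\lambda^*)$. Then the unconstrained minimizer lies in $\Omega$ and is also the constrained one.
\item Second, check that some $\lambda^*$ with all $\lambda_i<1$ solves the constraint equation. Coordinatewise the equation reads $t_i=\big(\frac1n\sum_j (Z_j^i)^2\big)/(1-\lambda_i)^2$. This is solvable exactly when $t_i>0$ and the empirical second moment $\frac1n\sum_j (Z_j^i)^2$ is positive, giving $1-\lambda_i=\sqrt{\frac1n\sum_j (Z_j^i)^2/t_i}$.
\end{enumerate}
I would make these implicit hypotheses explicit in the write-up and then invoke Theorem~\ref{proj} in both directions of the ``if and only if''.
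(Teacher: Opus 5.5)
Your proposal is correct and follows the paper's route. The paper likewise solves the first-order condition $2(x-y)-D\Phi(x)\lambda=0$ of the strongly convex objective to get $T_\lambda(y)=y\oslash(1-\lambda)$, and then specializes Theorem~\ref{proj} to the empirical measure.

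Your additions are sound refinements that make explicit hypotheses the paper leaves implicit:
\begin{itemize}
\item You allow any $\lambda_i<1$ instead of the paper's $\lambda_i<0$. The latter only covers $t_i<\frac1n\sum_j (Z_j^i)^2$.
\item You require the scaled points to lie in $\Omega$.
\item You solve explicitly for $\lambda^*$. Your ``exactly when'' misses only the trivial case $t_i=\frac1n\sum_j (Z_j^i)^2=0$, where any $\lambda_i<1$ works.
\end{itemize}
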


\subsection{Linear and quadratic}
Take $j_0\in\{1,\ldots, d\}$ and define $\Phi(x) = (x_{j_0}, x_{j_0}^2).$ In this case, $\Phi$ is strongly convex. If $\lambda_i<0$ for all $i=1, 2$ then $h(x) = \|x-y\|^2 -\sum_{i=1}^k\lambda_i\Phi_i(x)$ is strongly convex and we have existence and uniqueness of
\[\inf_x\left\{ \|x-y\|^2 - \sum_{i=1}^k\lambda_i\Phi_i(x)\right\}\] immediately guaranteed by finding a critical point. Let $h(x) = \|x-y\|^2 - (\lambda_1x_{j_0} + \lambda_2x_{j_0}^2)$ and take derivative in $x:$
\begin{align}
    \nabla h(x) = 2(x-y)-(\lambda_1e_{j_0} + \lambda_2x_{j_0}e_{j_0}).
\end{align}

The optimal point is $T_\lambda(y)=({T_\lambda(y)}_1, \ldots, {T_\lambda(y)}_d)$, where
\[{T_\lambda(y)}_i = \begin{cases}
    \frac{\lambda_1/2+y_{j_0}}{1-\lambda_2}, \text{ if } i = j_0,\\
    y_i, \text{ otherwise.} 
\end{cases}\]

\subsection{Cross product}
Choose $j_0, j_1\in\{1,\ldots, d\}$ and define $\Phi(x) = x_{j_0}x_{j_1}.$ 
For $x^*\in\R^d$ to be a minimizer of $h(x) = \|x-y\|^2 - \lambda x_{j_0}x_{j_1}$, it has to be a critical point.  Computing the gradient 
\begin{align}
    &\frac{\partial h(x)}{\partial x_i} = 2(x_i - y_i), \text{ for } i \neq j_0, j_1\\
    &\frac{\partial h(x)}{\partial x_{j_0}} = 2(x_{j_0}-y_{j_0}) - \lambda x_{j_1}\\
    &\frac{\partial h(x)}{\partial x_{j_1}} = 2(x_{j_1}-y_{j_1}) - \lambda x_{j_0}.
\end{align}
gives the following system of equations:
\begin{align*}
    \begin{cases}
    x_i = y_i, \text{ for } i \neq j_0, j_1\\
    2x_{j_0} - \lambda x_{j_1} = 2y_{j_0},\\
    2x_{j_1} - \lambda x_{j_0} = 2y_{j_1}.
\end{cases}
\end{align*}
This system has a solution if and only if $\lambda\neq 2, -2.$ Now, a sufficient condition for $x^*$ to be a minimizer is the positive definiteness of the hessian matrix in $x^*$. When $j_0=1$ and $j_1=2$, it is given by
\begin{align}
     D^2h(x^*) =  \begin{bmatrix}
2 & -\lambda & \ldots & 0 \\
-\lambda & 2 & 0 & \vdots\\
\vdots & 0 & \ddots  & 0 \\
0 & \ldots & 0 & 2
\end{bmatrix}.
 \end{align}
In any case, the eigenvalues of $D^2h(x^*)$ are $2+\lambda, 2-\lambda$ and $2$. Consequently, we need $|\lambda|<2$.

The optimal point is $T_\lambda(y)=({T_\lambda(y)}_1, \ldots, {T_\lambda(y)}_d)$, where
\[{T_\lambda(y)}_i = \begin{cases}
    y_i, \text{ for } i \neq j_0, j_1,\\[10pt]
    y_{j_0} + \frac{\lambda}{2}\frac{2y_{j_1}+\lambda y_{j_0}}{2-\lambda^2/2}, \text{ if } i = j_0,\\[10pt]
    \frac{2y_{j_1}+\lambda y_{j_0}}{2-\lambda^2/2}, \text{ if } i = j_1.
\end{cases}\]

\subsection{Linear and cross product}
Take $j_0, j_1\in\{1,\ldots, d\}$ and define $\Phi(x) = (x_{j_0}, x_{j_1}, x_{j_0}\cdot x_{j_1}).$ For $x^*\in\R^d$ to be a minimizer of $h(x) = \|x-y\|^2 - \lambda_1 x_{j_0}-\lambda_2x_{j_1}-\lambda_3x_{j_0}x_{j_1}$, it has to be a critical point.  Computing the gradient 
\begin{align*}
    &\frac{\partial h(x)}{\partial x_i} = 2(x_i - y_i), \text{ for } i \neq j_0, j_1\\
    &\frac{\partial h(x)}{\partial x_{j_0}} = 2(x_{j_0}-y_{j_0}) - \lambda_1 -\lambda_3 x_{j_1}\\
    &\frac{\partial h(x)}{\partial x_{j_1}} = 2(x_{j_1}-y_{j_1}) - \lambda_2-\lambda_3 x_{j_0}.
\end{align*}
gives the following system of equations:
\begin{align}
    \begin{cases}
    x_i = y_i, \text{ for } i \neq j_0, j_1\\
    2x_{j_0} - \lambda_3 x_{j_1} = 2y_{j_0}+\lambda_1,\\
    2x_{j_1} - \lambda_3 x_{j_0} = 2y_{j_1}+\lambda_2.
\end{cases}
\end{align}
This system has a solution if and only if $\lambda_3\neq 2, -2.$ Now, a sufficient condition for $x^*$ to be a minimizer is the positive definiteness of the hessian matrix in $x^*$. When $j_0=1$ and $j_1=2$, it is given by
\begin{align}
     D^2h(x^*) =  \begin{bmatrix}
2 & -\lambda_3 & \ldots & 0 \\
-\lambda_3 & 2 & 0 & \vdots\\
\vdots & 0 & \ddots  & 0 \\
0 & \ldots & 0 & 2
\end{bmatrix}.
 \end{align}
In any case, the eigenvalues of $D^2h(x^*)$ are $2+\lambda_3, 2-\lambda_3$ and $2$. Consequently, we need $|\lambda_3|<2$.

The optimal point is $T_\lambda(y)=({T_\lambda(y)}_1, \ldots, {T_\lambda(y)}_d)$, where
${T_\lambda(y)}_i$ is defined by   
{\small
$$\begin{cases}
    y_i, i \neq j_0, j_1,\\[10pt]
    y_{j_0} + \frac{\lambda_1}{2}+\frac{\lambda_3}{2}\left(\left(y_{j_1}+\frac{\lambda_2}{2}+\frac{\lambda_3}{2}(y_{j_0}+\frac{\lambda_1}{2})\right)\left(1-\frac{\lambda_3^2}{4}\right)^{-1}\right), i = j_0,\\[10pt]
    \left(y_{j_1}+\frac{\lambda_2}{2}+\frac{\lambda_3}{2}(y_{j_0}+\frac{\lambda_1}{2})\right)\left(1-\frac{\lambda_3^2}{4}\right)^{-1}
   , i = j_1.
\end{cases}
$$
}

\end{document}